\documentclass{article} 
\usepackage[preprint]{colm2026_conference}

\usepackage{microtype}
\usepackage{graphicx}
\usepackage{subcaption}
\usepackage{booktabs} 
\usepackage{tabularx,array}
\newcolumntype{Y}{>{\centering\arraybackslash}X}
\usepackage[ruled,vlined]{algorithm2e}
\usepackage{multicol}
\usepackage{multirow}
\SetKwProg{Fn}{Function}{}{end}\SetKwFunction{FRecurs}{FnRecursive}%

\usepackage{xcolor}
\definecolor{xueyanblue}{HTML}{6893e9}
\usepackage{hyperref}
\hypersetup{
    colorlinks=true,
    linkcolor=xueyanblue,
    citecolor=xueyanblue,
    urlcolor=xueyanblue,
}

\usepackage{amsmath}
\usepackage{amssymb}
\usepackage{mathtools}
\usepackage{amsthm}
\usepackage{bm}
\usepackage{bbold}

\usepackage{amsmath,amsfonts,bm}

\def\eqref#1{equation~\ref{#1}}

\def\1{\bm{1}}

\def\rvs{{\mathbf{s}}}

\def\rvx{{\mathbf{x}}}
\def\rvy{{\mathbf{y}}}

\DeclareMathAlphabet{\mathsfit}{\encodingdefault}{\sfdefault}{m}{sl}
\SetMathAlphabet{\mathsfit}{bold}{\encodingdefault}{\sfdefault}{bx}{n}

\newcommand{\E}{\mathbb{E}}

\DeclareMathOperator*{\argmax}{arg\,max}
\DeclareMathOperator*{\argmin}{arg\,min}

\usepackage{enumitem}

\renewcommand{\paragraph}{\textbf}

\usepackage{tikz}

\usepackage{wrapfig}

\usepackage{pifont}  
\newcommand{\xmark}{\ding{55}}%

\usepackage{stfloats}

\usepackage{xurl}

\newcommand{\figpartlabel}[2]{%
  \phantomsection
  \label{#1-#2}
}
\newcommand{\figpartref}[2]{%
  \hyperref[#1-#2]{Figure~\ref*{#1}~{(#2)}}%
}

\definecolor{c0}{HTML}{CC6677}
\definecolor{c1}{HTML}{332288}
\definecolor{c2}{HTML}{DDCC77}
\definecolor{c3}{HTML}{117733}
\definecolor{c4}{HTML}{88CCEE}
\definecolor{c5}{HTML}{882255}
\definecolor{c6}{HTML}{44AA99}
\definecolor{c7}{HTML}{999933}
\definecolor{c8}{HTML}{AA4499}
\definecolor{c9}{HTML}{DDDDDD}

\usepackage[capitalize,noabbrev,nameinlink]{cleveref}
\crefname{enumi}{step}{steps}
\Crefname{enumi}{Step}{Steps}

\theoremstyle{plain}
\newtheorem{theorem}{Theorem}[section]
\newtheorem{proposition}[theorem]{Proposition}

\theoremstyle{definition}
\theoremstyle{plain}  
\newtheorem{remark}[theorem]{Remark}

\crefname{theorem}{Theorem}{Theorems}
\Crefname{theorem}{Theorem}{Theorems}

\crefname{proposition}{Proposition}{Propositions}
\Crefname{proposition}{Proposition}{Propositions}

\crefname{lemma}{Lemma}{Lemmas}
\Crefname{lemma}{Lemma}{Lemmas}

\crefname{corollary}{Corollary}{Corollaries}
\Crefname{corollary}{Corollary}{Corollaries}

\crefname{definition}{Definition}{Definitions}
\Crefname{definition}{Definition}{Definitions}

\crefname{assumption}{Assumption}{Assumptions}
\Crefname{assumption}{Assumption}{Assumptions}

\crefname{remark}{Remark}{Remarks}
\Crefname{remark}{Remark}{Remarks}

\newcounter{definition}[section]
\renewcommand{\thedefinition}{\thesection.\arabic{definition}}

\usepackage[textsize=tiny]{todonotes}

\usepackage{lineno}

\title{Efficient Test-Time Inference via Deterministic Exploration of Truncated Decoding Trees}

\author{Xueyan Li\textsuperscript{1,2}\qquad
Johannes Zenn\textsuperscript{1,3,4,6}\qquad
Ekaterina Fadeeva\textsuperscript{2}\qquad
Guinan Su\textsuperscript{1,3}\\[0.1em]
\textbf{Mrinmaya Sachan\textsuperscript{2}\qquad
Jonas Geiping\textsuperscript{1,3,5}}
\\[0.5em]
\textsuperscript{\textbf{1}}Max Planck Institute for Intelligent Systems \quad
\textsuperscript{\textbf{2}}ETH Zurich \quad
\textsuperscript{\textbf{3}}AI Center Tübingen \\
\textsuperscript{\textbf{4}}University of Tübingen \quad
\textsuperscript{\textbf{5}}ELLIS Institute Tübingen \quad
\textsuperscript{\textbf{6}}IMPRS-IS
}
\begin{document}

\ifcolmsubmission
\linenumbers
\fi

\maketitle
\lhead{}
\chead{}
\rhead{}
\renewcommand{\headrulewidth}{0pt}

\begin{abstract}
Self-consistency boosts inference-time performance by sampling multiple reasoning traces in parallel and voting. 
However, in constrained domains like math and code, this strategy is compute-inefficient because it samples with replacement, repeatedly revisiting the same high-probability prefixes and duplicate completions. We propose Distinct Leaf Enumeration (DLE), a deterministic decoding method that treats truncated sampling as traversal of a pruned decoding tree and systematically enumerates distinct leaves instead of sampling with replacement. This strategy improves inference efficiency in two ways. Algorithmically, it increases coverage of the truncated search space under a fixed budget by exploring previously unvisited high-probability branches. Systemically, it reuses shared prefixes and reduces redundant token generation. Empirically, DLE explores higher-quality reasoning traces than stochastic self-consistency, yielding better performance on math, coding, and general reasoning tasks.
\end{abstract}

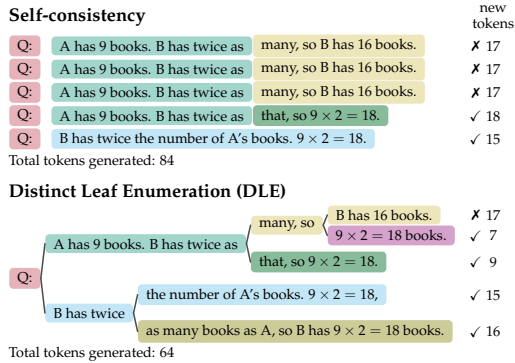
\begin{wrapfigure}{r}{0.5\linewidth}
    \figpartlabel{fig:1}{top}
    \figpartlabel{fig:1}{bottom}
    \vskip-2.25em
    \centering
    \resizebox{\linewidth}{!}{\begin{tikzpicture}[
  box/.style={rounded corners=2pt, inner xsep=4pt, inner ysep=2pt, align=left},
  conn/.style={line width=0.75pt, draw=black!70, line cap=round},
  font=\scriptsize,
  title/.style={font=\small\bfseries, anchor=west},
  subtitle/.style={font=\small\bfseries, anchor=west},
  wordcount/.style={font=\scriptsize, anchor=west}
]
\colorlet{cQ}{c0!50}
\colorlet{cOrange}{c6!50}
\colorlet{cOrange2}{c8!50}
\colorlet{cRed}{c2!50}
\colorlet{cGreen}{c3!50}
\colorlet{cCyan}{c4!50}
\colorlet{cBlue}{c7!50}

\node[title] (scTitle) at (-0.4,0) {Self-consistency};

\node[box, fill=cQ]      (q1)  at (0,-0.5) {Q:};
\node[box, fill=cOrange, right=0pt of q1.east, xshift=5pt, anchor=west] (r1a)
  {A has 9 books. B has twice as};
\node[box, fill=cRed, right=0pt of r1a] (r1b)
  {many, so B has 16 books.};

\node[box, fill=cQ]      (q2)  at (0,-0.9) {Q:};
\node[box, fill=cOrange, right=0pt of q2.east, xshift=5pt, anchor=west] (r2a)
  {A has 9 books. B has twice as};
\node[box, fill=cRed, right=0pt of r2a] (r2b)
  {many, so B has 16 books.};

\node[box, fill=cQ]      (q3)  at (0,-1.3) {Q:};
\node[box, fill=cOrange, right=0pt of q3.east, xshift=5pt, anchor=west] (r3a)
  {A has 9 books. B has twice as};
\node[box, fill=cRed, right=0pt of r3a] (r3b)
  {many, so B has 16 books.};

\node[box, fill=cQ]      (q4)  at (0,-1.7) {Q:};
\node[box, fill=cOrange, right=0pt of q4.east, xshift=5pt, anchor=west] (r4a)
  {A has 9 books. B has twice as};
\node[box, fill=cGreen, right=0pt of r4a] (r4b)
  {that, so $9\times 2=18$.};

\node[box, fill=cQ]      (q5)  at (0,-2.1) {Q:};
\node[box, fill=cCyan, right=0pt of q5.east, xshift=5pt, anchor=west] (r5a)
  {B has twice the number of A's books. $9\times 2=18$.};

\node[wordcount] (wordcount) at (-0.4,-2.5) {Total tokens generated: $84$};
\node[subtitle] (eleTitle) at (-0.4,-3.0) {Distinct Leaf Enumeration (DLE)};

\node[box, fill=cQ] (q0) at (0,-4.5) {Q:};

\node[box, fill=cOrange, right=0pt of q0.east, anchor=west, xshift=2pt, yshift=17pt] (aTop)
  {A has 9 books. B has twice as};
\node[box, fill=cCyan, right=0pt of q0.east, anchor=west, xshift=2pt, yshift=-17pt] (aBot)
  {B has twice};

\draw[conn] (q0.east) -- (aTop.west);
\draw[conn] (q0.east) -- (aBot.west);

\node[box, fill=cRed,   anchor=west, right=0pt of aTop.east, xshift=2pt, yshift=9pt] (tMid) {many, so};
\node[box, fill=cGreen, anchor=west, right=0pt of aTop.east, xshift=2pt, yshift=-9pt] (tAlt) {that, so $9\times 2=18$.};

\draw[conn] (aTop.east) -- (tMid.west);
\draw[conn] (aTop.east) -- (tAlt.west);

\node[box, fill=cRed,    anchor=west, right=0pt of tMid.east, xshift=2pt, yshift=5pt] (leafWrong) {B has 16 books.};
\node[box, fill=cOrange2, anchor=west, right=0pt of tMid.east, xshift=2pt, yshift=-5pt] (leafCalc) {$9\times 2=18$ books.};

\draw[conn] (tMid.east) -- (leafWrong.west);
\draw[conn] (tMid.east) -- (leafCalc.west);

\node[box, fill=cCyan, anchor=west, right=0pt of aBot.east, xshift=2pt, yshift=9pt] (bMid)
  {the number of A's books. $9\times 2=18$,};
\node[box, fill=cBlue, anchor=west, right=0pt of aBot.east, xshift=2pt, yshift=-9pt] (bLeaf)
  {as many books as A, so B has $9\times 2=18$ books.};

\draw[conn] (aBot.east) -- (bMid.west);
\draw[conn] (aBot.east) -- (bLeaf.west);

\coordinate (labelx) at ([xshift=6mm]leafWrong.east);
\node[box] (l1) at (labelx |- r1b.east) {\xmark};
\node[box] (l2) at (labelx |- r2b.east) {\xmark};
\node[box] (l3) at (labelx |- r3b.east) {\xmark};
\node[box] (l4) at (labelx |- r4b.east) {\checkmark};
\node[box] (l5) at (labelx |- r5a.east) {\checkmark};

\coordinate (labelxx) at ([xshift=9mm]leafWrong.east);
\node[box] (l12) at (labelxx |- r1b.east) {$17$};
\node[box] (l22) at (labelxx |- r2b.east) {$17$};
\node[box] (l32) at (labelxx |- r3b.east) {$17$};
\node[box] (l42) at (labelxx |- r4b.east) {$18$};
\node[box] (l52) at (labelxx |- r5a.east) {$15$};

\coordinate (labelx2) at ([xshift=6mm]leafWrong.east);
\node[box] (l1) at (labelx2 |- leafWrong.east) {\xmark};
\node[box] (l2) at (labelx2 |- leafCalc.east)  {\checkmark};
\node[box] (l3) at (labelx2 |- tAlt.east) {\checkmark};
\node[box] (l4) at (labelx2 |- bMid.east)  {\checkmark};
\node[box] (l5) at (labelx2 |- bLeaf.east)  {\checkmark};

\coordinate (labelx2) at ([xshift=9mm]leafWrong.east);
\node[box] (l11) at (labelx2 |- leafWrong.east) {$17$};
\node[box] (l21) at (labelx2 |- leafCalc.east)  {$7$};
\node[box] (l31) at (labelx2 |- tAlt.east) {$9$};
\node[box] (l41) at (labelx2 |- bMid.east)  {$15$};
\node[box] (l51) at (labelx2 |- bLeaf.east)  {$16$};

\node[wordcount] (wordcount) at (-0.4,-5.8) {Total tokens generated: $64$};

\node[box, align=center] (l1) at ([yshift=5mm] labelxx |- r1b.east) {new\\tokens};

\end{tikzpicture}}
    \caption{
    \textbf{Distinct Leaf Enumeration (DLE) avoids recomputing the same prefixes, explores the generation space systematically, and generates fewer new tokens.}}
    \label{fig:1}
    \vskip-1.5em
\end{wrapfigure}

\section{Introduction} \label{sec:introduction}

Parallel test-time computation has emerged as a principled way of scaling the performance of large language models (LLMs) \citep{snell_scaling_2024,lin_just_2024}. 
By sampling multiple solution traces for the same prompt and aggregating them via majority voting or answer-weighted selection \citep{wang_soft_2024,taubenfeld-etal-2025-confidence,wang_self-consistency_2023}, additional inference time compute can be used for performance gains. 
However, naive stochastic sampling of multiple solutions is inefficient. 
For constrained domains like math and coding where problems are only solved by a narrow set of viable solutions, similar reasoning steps are repeatedly generated \citep{zhu2024path,gao2025decoding,hong2025slim}. 

\looseness=-1
\figpartref{fig:1}{top} shows this redundancy where three out of five generations are the same (first three rows), and four out of five generations share a prefix. 
\Cref{fig:2} makes this more concrete by showing the repetition rate among prefixes: we find that for code generation, $20\%$ of all prefix tokens are repeated across generated traces. 
Thinking of generated sequences in terms of a tree structure, this means that early branches are repeatedly traversed. 
At the same time, the number of possible continuations grows exponentially per generated token. This makes search efficiency critical.

Classical search methods such as beam search \citep{lowerre1976harpy,sutskever2014sequence} make this tree structure explicit and control the traversed range with their number of branches (beams). 
However, beam search results in degenerate repetitions \citep{vijayakumar2016diverse,cohen_empirical_nodate}, quality deterioration with larger beams \citep{koehn_six_2017,yang_breaking_2018} and a lack of diversity \citep{su_contrastive_2022}.

\begin{wrapfigure}{r}{0.3\linewidth}
    \vskip-1.25em
    \centering
    \includegraphics{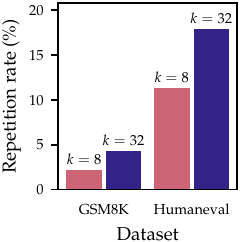}
    \caption{\textbf{The repetition rate of prefix tokens increases as more sequences are sampled for $\varepsilon$-sampling.}
    For code generation, the rate increases from $\approx11\%$ to $\approx17\%$.
    Details of calculation can be found in \Cref{appendix:datasets and parameters}.
    }
    \label{fig:2}
    \vskip-1.75em
\end{wrapfigure}

\looseness=-1
In this work, we argue that effective tree-based generation for modern LLMs requires two practical ingredients, a principled way to \textit{truncate} the exponentially large tree of possible completions, and a good heuristic for \textit{partial exploration} under a limited compute budget. 
Since the full decoding tree is intractable, the tree size can be reduced by a truncated next-token distribution. Common truncated samplers such as  top-$k$ \citep{fan_hierarchical_2018}, top-$p$ (nucleus sampling) \citep{holtzman_curious_2020}, min-$p$ \citep{minh2025turning} and $\varepsilon$-sampling \citep{hewitt_truncation_2022}, are widely used to stabilize generation with a restricted set of candidate tokens at each decoding step \citep{li_sample_2025}.
Unlike beam search, which considers alternatives at every position, truncation-based sampling branches only when multiple valid tokens survive the truncation rule.
Modern 
post-trained models are overconfident \citep{tian_just_2023,leng_taming_2025}, often exhibiting sharpened output distributions. 
Thus, only a few tokens survive truncation at each step which reduces the effective tree size. 

Motivated by these observations, we propose \textbf{Distinct Leaf Enumeration (DLE)}, an alternative to (sampled) self-consistency that minimizes repetitions via tree search. 
DLE \textit{enumerates} distinct leaves in the truncated generation tree by choosing \textit{deterministically} which tokens to explore instead of \textit{sampling} from the vocabulary probability distribution.
The output is a set of distinct reasoning chains that never generate duplicate traces. 
This makes DLE a natural replacement for self-consistency in settings where repeated sampling is especially wasteful, such as in code synthesis and tightly specified math problems.

\looseness=-1
DLE differs from prior decoding and search methods in both objective and mechanism. Unlike beam-style methods \citep{sutskever2014sequence,vijayakumar2016diverse}, it does not maintain diverse partial beams at each step. Unlike sampling-without-replacement methods \citep{kool_stochastic_2019,shi_incremental_2021}, it is not designed to preserve a target sampling distribution. Unlike tree-of-thought \citep{yao2023tree,xu2025phi} style branching, it requires no additional deliberation or evaluation calls. Instead, DLE deterministically enumerates distinct leaves of a truncated decoding tree to improve coverage under a fixed compute budget.

Further, DLE is able to \textit{reuse generated prefixes}.
Since it expands a pruned tree, it never re-generates the same prefix and lowers the number of newly generated tokens for the same number of sequences compared to self-consistency.

In this paper, we propose DLE, as a \textit{deterministic} enumeration method \textit{independent of the truncated sampling distribution}.
Empirically, at equal token budgets, DLE covers more probability space than self-consistency, translating to better performance.
This also results in reduced inference time especially in memory-constrained conditions.
Our main contributions are:

\begin{itemize}[leftmargin=*, itemsep=-0.em, topsep=-0.75em]
    \item We propose Distinct Leaf Enumeration (DLE), a repetition-free decoding method as a \emph{deterministic} drop-in replacement for stochastic sampling. It applies broadly to samplers such as top-$p$, min-$p$ and $\varepsilon$-sampling.
  
    \item 
    Algorithmic benefit: Empirically, we show that higher coverage of the truncated search space is associated with better downstream performance. DLE achieves higher coverage than stochastic self-consistency by systematically exploring previously unvisited high-probability branches, leading to stronger reasoning performance.
  
    \item 
    Systems benefit: We show empirically that DLE is more token-efficient than self-consistency. By avoiding redundant generations, DLE produces more complete sequences under a fixed token budget, which translates into practical latency improvements on modern inference systems, especially in memory-constrained settings.

\end{itemize}

\section{Related Work} \label{sec:related-work}

Our work targets a specific inefficiency in test-time scaling for constrained reasoning: repeated sampling produces many duplicate prefixes and often re-discovers the same high-probability traces. 
DLE addresses this by (i) enumerating \emph{distinct} leaves in a truncated sampling tree and (ii) exploiting prefix reuse across branches.

\paragraph{Test-time ensembling and self-consistency.}
Self-consistency aggregates multiple reasoning traces to improve accuracy, typically via majority voting on extracted answers \citep{wang_self-consistency_2023}. Follow-up work extends aggregation beyond simple answer extraction to free-form settings, e.g., by letting an LLM compare, synthesize, or select among sampled traces \citep{wang_soft_2024,chen2024universal,wang2024integrate}. 
Orthogonally, several methods reweight traces using confidence or selection criteria (e.g., self-certainty, filtering) to improve best-of-$N$ style inference \citep{kang_scalable_2025,fu_deep_2025}. 
In contrast, DLE focuses on making the \emph{sampling stage} itself non-redundant, guaranteeing a set of distinct answer traces.

\paragraph{Reducing redundancy in aggregation by pruning.}
Early-stopping self-consistency terminates sampling once answers stabilize \citep{li2024escape}, and adaptive-consistency allocates variable budgets across prompts and samples \citep{aggarwal2023let}. Path-based strategy reuses partial computation by expanding from promising prefixes, improving latency while maintaining accuracy \citep{zhu2024path}. 
DLE is complementary with a different mechanism: rather than adapting the number of i.i.d.\ samples, it deterministically enumerates \emph{previously unexplored} branches of the same truncated tree. 
This also connects to broader work on sampling sequences without replacement that aim to avoid duplicates while preserving distributional guarantees \citep{kool_stochastic_2019,shi_incremental_2021}.

\paragraph{Search-based decoding.}
Beam search \citep{lowerre1976harpy,sutskever2014sequence} makes the decoding tree explicit but suffers from degeneration and low diversity; diverse beam search addresses this by adding diversity-promoting terms \citep{vijayakumar2016diverse}. 
Other approaches promote beam diversity, e.g., via arithmetic coding style constructions \citep{vilnis2023arithmetic}. 
DLE differs from per-step breadth-first search. 
It traverses the pruned tree in a depth-first-like manner (greedy completions) and allocates a leaf budget to distinct trajectories, which is particularly well-matched to self-consistency-style voting in constrained domains.

\paragraph{Tree-based structures.}
Tree-of-thought style inference explores multiple reasoning paths with self-evaluation \citep{yao2023tree}. 
However, it requires repeated model prompting for each branch. 
Entroduction uses entropy and its variance to balance computation and exploration in multi-step reasoning \citep{zhang-etal-2025-entropy}, and $\phi$-decoding uses simulated foresight paths for globally informed step selection but also requires expensive extra computation \citep{xu2025phi}. 
DLE is cheaper by using only next-token probabilities and spends compute primarily on full leaf completions rather than partial rollouts plus re-ranking.

\paragraph{Efficiency via prefix reuse and decoding systems.}
Efficient implementations for multi-branch decoding leverage shared prefixes and specialized kernels or KV-cache data structures \citep{yao_deft_2024,juravsky_hydragen_2024,zheng2024sglang}. Trie-based decoding for beam search similarly shares KV cache across beams with common prefixes \citep{chan2025efficient}. 
DLE directly relates to these system optimizations because its exploration is explicitly tree-structured, which reuses the longest available prefix.

\section{Distinct Leaf Enumeration for Truncated Sampling Distributions} \label{sec:method}
In this section, we first introduce truncated sampling distributions generally, then, we motivate coverage as a metric. 
Finally, we introduce our distinct leaf enumeration algorithm.

\subsection{Truncated Sampling Distributions}

Here, we introduce truncated sampling distributions.
Let $p_t(\cdot \mid \rvx_{<t})$ denote the next-token distribution of the model at step $t$ given a prefix $\rvx_{<t}$ over vocabulary $\mathcal V$. 
A truncation rule defines a so-called \textit{active} set $A(\rvx_{<t}) =\{v \in \mathcal V: C(p_t(\cdot\mid \rvx_{<t}), v)\}$ consisting of tokens that fulfill a criterion $C$. 
Then, we can define the resulting truncated distribution $q$ as follows
\begin{align}
    q(v\mid \rvx_{<t}) =
    \begin{cases}
        \mathbb{1}[v = g(\rvx_{<t})] 
        & \text{if } |A(\rvx_{<t})| \le 1,\\
        \dfrac{p_t(v\mid \rvx_{<t})}{Z(\rvx_{<t})}\mathbb{1}[v\in A(\rvx_{<t})] 
        & \text{if } |A(\rvx_{<t})|\ge 2,
    \end{cases}
    \label{eq:qeps}
\end{align}
\looseness=-1
where $g(\rvx_{<t})\coloneqq\arg\max_{u} p_t(u\mid \rvx_{<t})$,
$\mathbb{1}$ is the indicator function,
and the normalization constant is given by $Z(\rvx_{<t})=\sum_{v\in A(\rvx_{<t})} p_t(v\mid \rvx_{<t})$.
For a sequence $\rvx_{1:T}\in\mathcal{V}^T$, the probability under the truncated distribution is the product of the conditional probabilities of \Cref{eq:qeps},
    $Q(\rvx_{1:T}) \;=\; \prod_{t=1}^{T} q(\rvx_t \mid \rvx_{<t}).$
Typically, $C$ removes \textit{low probability tokens} from the sampling distribution which are found to correlate with large \textit{epistemic uncertainty} \citet{li_sample_2025}.

\paragraph{Top-$k$ sampling \citep{fan_hierarchical_2018}.}
Top-$k$ sampling retains the $k$ highest-probability tokens under $p_t(\cdot \mid x_{<t})$.
Let $\pi_{\rvx_{<t}}$ be a permutation of the vocabulary such that $p_t(\pi_{\rvx_{<t}}(1)\mid \rvx_{<t}) \geq \cdots \geq p_t(\pi_{\rvx_{<t}}(|\mathcal V|)\mid \rvx_{<t})$.
Then, $C(p_t(\cdot \mid \rvx_{<t}), v) = \mathbb{1}[v \in \{\pi_{x_{<t}}(1), \dots, \pi_{x_{<t}}(k)\}]$.

\looseness=-1
\paragraph{Top-$p$ sampling \citep{holtzman_curious_2020}.}
top-$p$ retains the smallest set of tokens whose cumulative mass exceeds a threshold $p\in(0,1]$. 
$C(p_t(\cdot \mid \rvx_{<t}), v)$ is as for top-$k$ sampling above, but $
k = \argmin_{l}
\sum_{j=1}^{l} p_t(\pi_{\rvx_{<t}}(j)\mid \rvx_{<t}) \ge p
$.

\paragraph{min-$p$ sampling \citep{minh2025turning}.}
min-$p$ sampling uses a \emph{relative} threshold which retains all tokens whose probability is at least a fraction $p_{\min}$ of the most likely token.
$
    C(p_t(\cdot, \rvx_{<t}), v)
    =
    \mathbb{1}\!\left[
        p_t(v\mid \rvx_{<t})
        \ge
        p_{\min}\max_{u\in\mathcal V} p_t(u\mid \rvx_{<t})
    \right],
    $
where $p_{\min}\in(0,1]$. 

\paragraph{$\varepsilon$-sampling \citep{hewitt_truncation_2022}.}
\looseness=-1
At each generation step, $\varepsilon$-sampling only keeps tokens whose probability exceeds a fixed threshold $\varepsilon$.
It instantiates $C(p_t(\cdot, \rvx_{<t}), v)=\mathbb{1}[p_t(v\mid \rvx_{<t}) > \varepsilon]$ which considers only tokens above the threshold $\varepsilon$ where $\varepsilon\in(0,1]$.

\begin{wrapfigure}{r}{0.5\linewidth}
    \vskip-3.75em
    \centering
    \includegraphics[width=\linewidth]{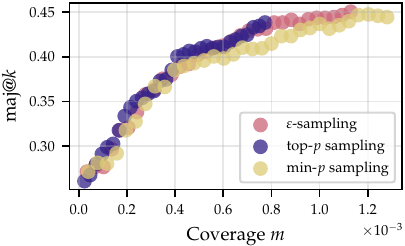}
    \caption{\textbf{
    More coverage of the decoding space by increasing the number of sequences aggregated ($k$) per question leads to better performance on GSM8K.
    }}
    \label{fig:coverage}
    \vskip-2.5em
\end{wrapfigure}
\subsection{Coverage} \label{sec:method:coverage}
Here, we define \textit{coverage} as the probability mass captured by a selected set of sequences under a truncated sequence distribution.

\refstepcounter{definition}
\label{def:coverage}
\textbf{Definition \thedefinition\ (Coverage).}
Let $\mathcal{L}'$ be any set of sequences sampled from a distribution $Q$ (which does not necessarily need to be a truncated distribution).
We define the \textit{coverage} $m_k$ of a set of sequences $\mathcal L'$ with $|\mathcal L'|=k$ as its probability mass under the truncated sampling distribution $Q$ as follows,
\begin{align}
m_k(\mathcal L') 
= \sum_{\rvx \in \mathcal L'} Q(\rvx).
\label{eq:coverage}
\end{align}
\vskip-0.25em
\looseness=-1
Empirically, coverage is indicative of performance.
In \Cref{fig:coverage} we find that coverage is positively correlated with performance across $\varepsilon$-sampling, top-$p$ sampling, and min-$p$ sampling.
We provide more details in \Cref{sec:exp:diversity}.
Next, we remark two properties of coverage.
\begin{remark}[Maximizing coverage over sequences]
\looseness=-1
Let $\mathcal L$ denote the set of all sequences that have non-zero probability mass.
Then, the maximizer of the coverage, i.e., $\argmax_{\mathcal{L}'} m_k(\mathcal L')$ is the set of the $k$ \textit{distinct} highest-probability sequences out of $\mathcal L$.
\end{remark}
%
\begin{remark}[Diminishing returns of expected coverage]
\looseness=-1
\textit{Expected} coverage $\E[m_k(\mathcal L)]$ (expectation over $k$ sequences of $Q$) leads to diminishing returns with increasing $k$ due to sampling duplicate sequences.
Specifically, the benefit of adding a new sequence $k+1$ is decreasing.
See \Cref{app:sec:expected-coverage-eps-sampling}.
\end{remark}

\subsection{Distinct Leaf Enumeration (DLE)} \label{sec:ele}

DLE is our proposed tree-based decoding algorithm that operates on the tree defined by a truncated sampling distribution (see above).
Specifically, the nodes in the tree $\rvx_{<t}$ represent prefixes with active sets $|A(\rvx_{<t})| > 1$ and the edges represent the next-token distribution $q(\cdot \mid\rvx_{<t})$.
We will refer to this tree as the \textit{pruned tree}.
DLE replaces sampling from $q$ by a \textit{deterministic} expansion scheme at a chosen node. It follows these steps:
\begin{enumerate}[leftmargin=*, itemsep=0pt, topsep=0pt, ref=\arabic*]
    \item 
    Given a prompt, DLE produces a greedy generation $\rvx_{1:T}$ until an end-of-sequence token (EOS) is reached, while keeping track of active sets along the path.
    
	\item 
    \label{enum:step:branch}
    DLE decides on the next branching location $i$ of the tree and picks an alternative token $v$ from the active set at that position.

	\item \label{enum:step:branch-greedy-generate}
    The alternative token is appended to the prefix $\rvx_{1:i-1}\circ v$.
    Then, DLE produces a greedy generation from that point until an EOS token is reached.
\end{enumerate}
\Cref{enum:step:branch} and \Cref{enum:step:branch-greedy-generate} are repeated until there either are no remaining active sets with non-trivial size $>1$ or until a maximum of \textit{$k$ leaf nodes} have been created.
The output of DLE is a set of distinct leaves $\mathcal{L}'$ and the corresponding weights $Q(\rvx)$ for $\rvx \in \mathcal{L}'$.

\looseness=-1
\paragraph{Expanding a node.}
A node $\rvx_{<t}$ is expanded depending on the size of its active set $A(\rvx_{<t})$.
\begin{itemize}[leftmargin=*, itemsep=0pt, topsep=0pt]
    \item 
    \textbf{Branching case $|A(\rvx_{<t})| > 1$}:
    One child is created for each token in the active set, which has prefix $\rvx_{<t}\circ v$, path probability $Q(\rvx_{<t} \circ v)$, and edge weight $q(v\mid\rvx_{<t})$.

    \item 
    \textbf{Non-branching case $|A(\rvx_{<t})|\le 1$}: 
    DLE creates one child from the largest token $v = g(\rvx_{<t})$ with prefix $\rvx_{<t}\circ v$, path probability $Q(\rvx_{<t}\circ v)$, and edge weight $1.0$.
\end{itemize}
\looseness=-1 For any prefix $\rvx_{<t}$ on the $\varepsilon$-pruned tree, its path probability is the product of the edge weights on its unique path from root to leaf.

\looseness=-1
\begin{remark}[If the number of leaves, $k$, is large, the support of the truncated sampling algorithm and the corresponding DLE is the same]
    \label[remark]{remark:support-equivalence-eps-sampling-ele}
    Note that if we explore all branches of the pruned tree, we enumerate all possible sequences $\rvx$ of the corresponding truncated sampling distribution with $Q(\rvx)>0$.
    Thus, their support is the same.
\end{remark}
\vskip-0.5em
\looseness=-1
While \Cref{remark:support-equivalence-eps-sampling-ele} makes a statement about arbitrary $k$, it can be practically infeasible to explore the full pruned tree, depending on its size.
Therefore, DLE deterministically expands previously unexplored branches of the same pruned tree to avoid generating duplicate leaves.
Using coverage, as defined in \Cref{def:coverage}, we see that, for the same leaf budget $k$, DLE allocates more compute towards uncovering new probability mass.
Therefore, we can conceptualize DLE as a \textit{compute-budgeted} procedure to order the exploration of the tree in a way that maximizes coverage (see \Cref{app:sec:greedy-sampling-optimality}).


DLE expands the branch that carries the largest probability mass seen so far. 
Each branch with prefix $\rvx_{<t}\circ v$ is scored as $Q(\rvx_{<t}\circ v)$ and we choose to expand the branch with the largest score.
This rule prioritizes exploring branches which currently seem most likely under $Q$ and, at the same time, is a greedy heuristic for uncovering new probability mass.
We provide an example of this algorithm to help understanding, and ablation study on various other branching rules in \cref{app:other algos}.


\begin{wrapfigure}{r}{0.5\linewidth}
    \vskip-1em
    \centering
    \resizebox{\linewidth}{!}{\begin{tikzpicture}[
  box/.style={rounded corners=2pt, inner xsep=4pt, inner ysep=2pt, align=left},
  conn/.style={line width=0.75pt, draw=black!70, line cap=round},
  font=\scriptsize,
  title/.style={font=\small\bfseries, anchor=west},
  subtitle/.style={font=\small\bfseries, anchor=west},
  wordcount/.style={font=\scriptsize, anchor=west}
]
\colorlet{cQ}{c0!50}
\colorlet{cRed}{c6!50}
\colorlet{cOrange2}{c8!50}
\colorlet{cBlue}{c2!50}
\colorlet{cGreen}{c3!50}
\colorlet{cOrange}{c4!50}
\colorlet{cCyan}{c7!50}


\node[box, fill=cQ] (q0) at (0,-4.15) {Q:};
\node[box, fill=cOrange, right=0pt of q0.east, anchor=west] (q01)
  {Firstly, [\dots]};

\node[box, fill=cOrange, right=0pt of q01.east, anchor=west, xshift=2pt, yshift=17pt] (aTop)
  {A has $9$ books. B has twice as};
\node[box, fill=cOrange, right=0pt of q01.east, anchor=west, xshift=2pt, yshift=-17pt] (aBot2)
  {B};
\node[box, fill=cCyan, right=0pt of aBot2.east, anchor=west] (aBot)
  {has twice the number of A's books.};

\draw[conn] (q01.east) -- (aTop.west);
\draw[conn] (q01.east) -- (aBot2.west);

\node[box, fill=cOrange,   anchor=west, right=0pt of aTop.east, xshift=2pt, yshift=9pt] (tMid) {many \underline{so B has} $16$ books.};
\node[box, fill=cOrange, anchor=west, right=0pt of aTop.east, xshift=2pt, yshift=-9pt] (tAlt2) {that\vphantom{\underline{B}}};
\node[box, fill=cGreen, anchor=west, right=0pt of tAlt2.east] (tAlt) {\underline{so B has}};

\node[box, right=0pt of tAlt.east, anchor=west, xshift=2pt, draw, line width=0.5pt, inner sep=2pt] (aBoti)
  {\textbf{STOP}};

\draw[conn] (aTop.east) -- (tMid.west);
\draw[conn] (aTop.east) -- (tAlt2.west);

\end{tikzpicture}}
    \caption{
    \textbf{Early stopping for DLE.}
    Schematic uses $n=3$ repetitions as early stopping condition.
    DLE generates the \textcolor{c4}{upper branch} first.
    When the \textcolor{c3}{middle branch} is generated, early stopping halts generation due to the tokens ``so B has'' being repeated.
    See \Cref{sec:ele}.
    }
    \vskip-1em
    \label{fig:early-stopping-branch}
\end{wrapfigure}
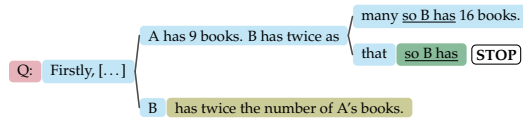
\paragraph{Early stopping.}
Branch alternatives can differ at the branching token but still lead to the \emph{same greedy suffix} later (e.g., semantically similar tokens converging under greedy decoding) \citep{hong2025slim}. 
To avoid repeatedly generating the same continuation, we early-stop \emph{after} the branch point once such a merge is detected (\Cref{fig:early-stopping-branch}).
If a branch matches $n=10$ consecutive tokens from a previously generated sibling continuation, we stop decoding that branch and reallocate the remaining sequence budget elsewhere. 
\Cref{app:sec:early-stopping} shows that this rule helps reduce the frequency of identical final answers since over 85\% of pruned branches would have produced the same final answer.

\subsection{Implementation} \label{sec:implementation}

In this section, we discuss leaf aggregation rules and the practical implementation of DLE.

\begin{table*}[!t]
\centering
\setlength{\tabcolsep}{3pt}
\renewcommand{\arraystretch}{1.2}
\newcommand{\na}{\multicolumn{1}{c}{\textemdash}}
{\small
\begin{tabularx}{\textwidth}{lYYYYYYYYY}
\toprule
\multirow{2}{*}{Method}
& \multicolumn{3}{c}{GSM8K} & \multicolumn{3}{c}{Humaneval} & \multicolumn{3}{c}{MMLU-Pro} \\
\cmidrule(lr){2-4} \cmidrule(lr){5-7} \cmidrule(lr){8-10}
& {\small maj@2} & {\small maj@4} & {\small maj@8}
& {\small pass@2} & {\small pass@4} & {\small pass@8}
& {\small maj@2} & {\small maj@4} & {\small maj@8} \\
\midrule
Self-consistency ($\tau=1$) & 19.26 & 25.32 & 31.61 & 18.29 & 26.22 & 33.54 & 13.61 & 15.15 & 16.24 \\
Self-consistency ($\tau=0.6$) & 29.57 & 36.24 & 41.32 & 31.10 & 42.07 & 48.17 & 16.15 & 16.43 & 17.84 \\
Beam search            & 35.10 & 36.47 & 36.92 & 31.10 & 29.88 & 32.93 & 15.49 & 15.29 & 15.01   \\
Diverse beam search    & - & 36.47 & 41.39 & - & 32.32 & 33.54 & - & 16.20 & 16.58 \\
DeepConf      & 21.68 & 26.91 & 33.36 & - & - & - & 14.41 & 15.58 & 16.76  \\
Self-certainty      & 20.09  & 25.32 & 32.22 & - & - & - & 14.51 & 15.92 & 14.68    \\
\midrule
Self-consistency (top-$p$\&top-$k$) & 25.47 & 31.84 & 38.74 & 31.71 & 36.59 & 45.73 & 14.96 & 16.66 & 17.35 \\
DLE (top-$p$\&top-$k$) & 34.80 & 41.17 & 44.43 & 39.02 & 43.90 & 51.22 & 16.36 & 17.30 & 18.19 \\
\midrule
Self-consistency (min-$p$) & 26.46 & 33.51 & 39.04 & 32.93 & 38.41 & 46.34 & 14.99 & 16.36 & 17.26 \\
DLE (min-$p$) & 33.74 & 39.58 & 43.97 & 38.41 & 45.12 & 53.05 & 16.45 & 17.40 & 17.89 \\
\midrule
Self-consistency ($\varepsilon$-sampling) & 26.84 & 33.51 & 40.94 & 32.93 & 39.63 & 47.56 & 14.89 & 16.07 & 17.20 \\
DLE ($\varepsilon$-sampling)  & 34.57 & 40.64 & 44.05 & 38.41 & 45.12 & 52.44 & 16.64 & 17.37 & 17.97  \\
\bottomrule
\end{tabularx}
}
\caption{
\looseness=-1
\textbf{DLE with various samplers improves performance compared to stochastic self-consistency.}
Performance (maj@k and pass@k) of various methods on GSM8K, Humaneval and MMLU-Pro with Qwen2.5-0.5B-Instruct with parameters given in \Cref{sec:exp}. 
Values are left blank when the method is not applicable to the metric. .
}
\label{table:main}
\end{table*}

Each generated leaf $\rvx \in \mathcal{L}'$ comes with its truncated sampling probability $Q(\rvx)$, which can be used for weighted aggregation with self-consistency.
In contrast to \citet{wang_soft_2024}, we find that uniform weighting is the most robust choice (see ablation in \cref{app:answer aggregation methods})

We highlight that DLE explores new branches from previously generated prefixes, so shared prefixes do not need to be recomputed and their KV caches can be reused. In contrast, standard self-consistency generates each sequence independently, repeatedly recomputing shared prefixes. As a result, DLE can benefit directly from inference engines that support prefix caching.
vLLM \citep{kwon2023efficient} uses \textit{automatic prefix caching} (APC), reusing cached KV blocks from prior requests when needed.
This lets any new request skip computing shared prefixes if they have been cached before.
SGLang \citep{zheng2024sglang} is built around KV cache reuse across multiple generation calls that is explicitly optimized for tree-based generations, using RadixAttention, which stores KV caches in a radix tree keyed by token prefixes. 
We show experiments on the performance of vLLM and SGLang in \Cref{sec:exp:computation}.

\subsection{Evaluation Metrics} \label{sec:metrics}

In this section, we define the metrics used to evaluate our experimental results in \Cref{sec:exp}.

\paragraph{Coverage.}
We define coverage $m(\widehat{\mathcal L})\in[0, 1]$ in \Cref{def:coverage} over a set of leaves $\mathcal L'$ quantifying how much of the pruned tree or probability mass DLE has explored.

\looseness=-1
\paragraph{Cache hit rate.}
We measure cache reuse over the \emph{flattened tree} of the $k$ enumerated leaves. 
Each leaf corresponds to one generation stream, and we concatenate all streams to obtain a single measure $L_{\mathrm{flat}}$ that includes all repeated prompt and generated tokens. 
    The \textit{actual cache hit rate} is the total number of cached tokens reported by SGLang (via \texttt{cached\_tokens}) divided by the total flat length: ${C_{\mathrm{act}}}/{L_{\mathrm{flat}}}$.
    The \textit{theoretical cache hit rate} is the maximum number of tokens that \emph{could} be reused given the enumerated tree structure, i.e., the length of the longest prefix of a leaf that matches any previously generated leaf (including the prompt): ${C_{\mathrm{th}}}/{L_{\mathrm{flat}}}$.

\section{Experiments} \label{sec:exp}

\begin{figure}[t]
    \centering
    \includegraphics{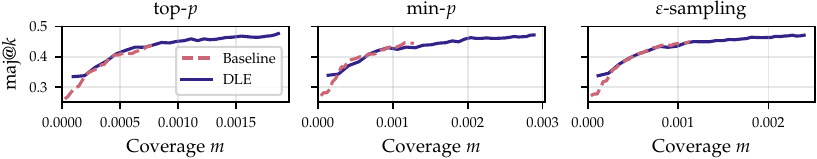}
    \caption{\textbf{DLE achieves higher coverage and higher maj@$k$ than sampled self-consistency.} 
    Across truncation rules, accuracy increases with coverage for both methods, but DLE covers more probability mass at the same sequence budget. See \Cref{sec:exp:diversity}.}
    \label{fig:coverage-accuracy-pairs}
\end{figure}

\begin{figure}[b]
    \centering
    \figpartlabel{fig:coverage-accuracy}{left}
    \figpartlabel{fig:coverage-accuracy}{right}
    \includegraphics{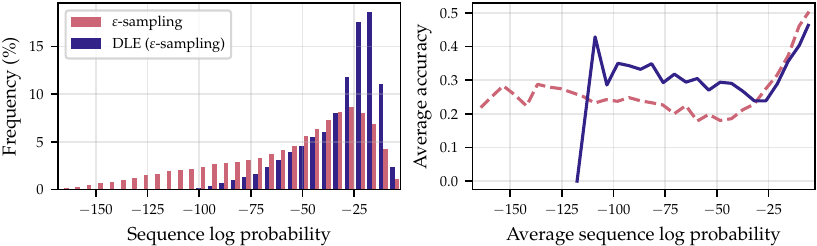}
    \caption{
    \looseness=-1
    \textbf{Left: The per sequence $\log$-probabilities are higher in DLE ($\varepsilon$-sampling) compared to self-consistency.}
    \textbf{Right: Sequences with log-probability $\geq-100$ are more accurate for DLE ($\varepsilon$-sampling) compared to self-consistency.}
    }
    \label{fig:coverage-accuracy}
\end{figure}

\looseness=-1
This section evaluates the proposed DLE algorithm on reasoning and coding tasks with various truncated sampling distributions (see \Cref{sec:method}).
We choose $\varepsilon=0.05$, $p=0.1$ for min-$p$, for top-$k$ we use $k=10$ in combination with top-$p$, $p=0.95$ which is commonly used in Qwen models \citep{qwen_qwen25_2025}. We run experiments with the Qwen \citep{qwen_qwen25_2025} and Llama \citep{dubey_llama_2024} model families.
We test mathematical reasoning with GSM8K \citep{cobbe_training_2021}, code generation with HumanEval \citep{chen2021evaluatinglargelanguagemodels}, and general reasoning with MMLU-Pro \citep{wang_mmlu-pro_2024}.
We compare DLE to other state-of-the-art parallel reasoning strategies such as DeepConf \citep{fu_deep_2025}, Self-Certainty \citep{kang_scalable_2025}, beam search \citep{lowerre1976harpy,sutskever2014sequence}, and diverse beam search \citep{vijayakumar2016diverse}.
For additional details, refer to \Cref{appendix:datasets and parameters} and \cref{app:additional results}.

\subsection{Coverage and Sequence Probability} \label{sec:exp:diversity}

\looseness=-1
\Cref{fig:coverage-accuracy-pairs} plots coverage against maj@$k$ across increasing values of $k$. 
For both DLE and the corresponding sampling baseline, accuracy improves as more probability mass is covered, indicating that the two methods follow a similar overall coverage-accuracy relationship. 
The key difference is that DLE moves further along this curve under the same sequence budget. 
For a given number of generated sequences, it attains higher coverage, which translates into higher performance as measured by maj@$k$. 

This interpretation is supported by \figpartref{fig:coverage-accuracy}{left}, where the per sequence $\log$-probabilities are higher for DLE than for the corresponding $\varepsilon$-sampling baseline. In \figpartref{fig:coverage-accuracy}{right}, these higher-probability sequences are also more accurate on average. Together, these results provide evidence that coverage is a meaningful metric for understanding performance.

\Cref{fig:coverage-epsilon} further shows that DLE increases coverage across sampler hyperparameters. The gain is largest when the per-step active set is larger. For example, in top-$p$ sampling, a larger $p$ includes more candidate tokens at each step, so the probability mass is distributed across more options. As a result, stochastic sampling is more likely to spend budget on lower-probability branches. DLE, in contrast, prioritizes branches with high sequence-level probability mass and then completes them greedily, allowing it to cover more of the search space under the same sequence budget.

\begin{figure}[t]
    \centering
    \includegraphics{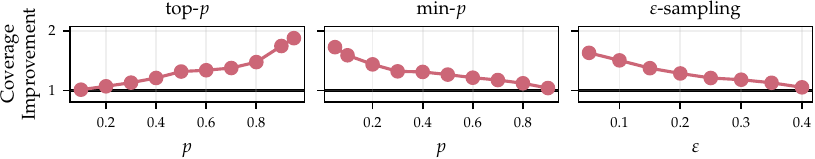}
    \caption{
    \textbf{DLE covers more of the search space compared to its baseline truncated sampling distributions.}
    Ratio of coverage $m$ (see \Cref{def:coverage}) between DLE and corresponding baseline plotted against $p$ (top-$p$, min-$p$) and $\varepsilon$ ($\varepsilon$-sampling).
    See \Cref{sec:exp:diversity}. 
    }
    \label{fig:coverage-epsilon}
\end{figure}

\begin{figure}[b] 
    \figpartlabel{fig:accuracy-number-tokens-eps-ele}{left}
    \figpartlabel{fig:accuracy-number-tokens-eps-ele}{right}
    \centering
    \includegraphics[width=0.495\textwidth]{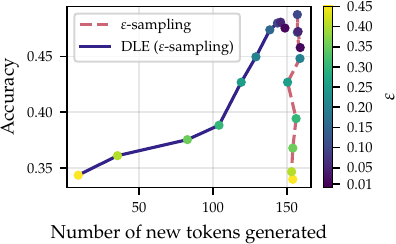}
    \includegraphics[width=0.495\textwidth]{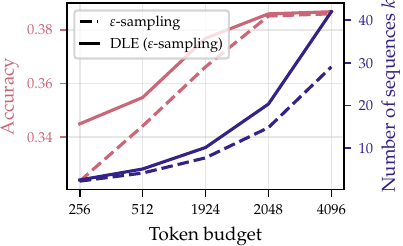}
    \caption{
    \textbf{Left}:
    \textbf{DLE ($\varepsilon$-sampling) outperforms $\varepsilon$-sampled self-consistency with far fewer new tokens generated.}
    The number of new tokens is averaged over $k=32$ for each question.
    $\varepsilon$-sampling generates many more tokens for similar performance as compared to DLE.
    $\varepsilon$ is colored according to its value.
    See \Cref{sec:exp:performance}.
    \textbf{Right}:
    \textbf{For a fixed token budget, DLE achieves a higher accuracy and generates more sequences compared to $\varepsilon$-sampling.}
    }
    \label{fig:accuracy-number-tokens-eps-ele}
\end{figure}

\subsection{Performance Improvements} \label{sec:exp:performance}

\looseness=-1
\Cref{table:main} shows various performance metrics for DLE with the three investigated samplers top-$p$ \& top-$k$, min-$p$, and $\varepsilon$-sampling for Qwen2.5-0.5B-Instruct. 
We use the same $k$ as beam size for the beam search method and also test the other methods with $k$ completed solutions.

We find that DLE improves upon sampling-based self-consistency with the corresponding truncated sampling baseline.
We observe the largest improvements for GSM8K ($\approx3$-$9$\%) followed by large improvements for Humaneval ($\approx4$-$7$\%). Beam search and diverse beam search do not yield consistent improvements as beam size and compute increase (as compared to the self-consistency baseline). 
This shows their limited effectiveness as a breadth first, per-step tree search algorithm.

\paragraph{DLE ($\varepsilon$-sampling) needs fewer tokens to reach the same accuracy compared to self-consistency.}
\figpartref{fig:accuracy-number-tokens-eps-ele}{left} shows that DLE ($\varepsilon$-sampling) requires far fewer tokens than $\varepsilon$-sampling to achieve a similar accuracy in both majority voting and pass@$k$.
To reach a performance of $\approx 34\%$, $\varepsilon$-sampled self-consistency would generate $>150$ tokens with $\varepsilon=0.45$ while DLE only needs $<20$ new tokens on average per question (for each of the 32 sequences) for similar performance.
Larger $\varepsilon$ results in more prefix reuse as branching happens later. 
This is also reflected in a higher theoretical cache reuse rate in \figpartref{fig:runtime-epsilon}{left}.
For small $\varepsilon$ values, the performance plateaus for DLE while it \textit{degrades} for $\varepsilon$-sampling.

\looseness=-1
\paragraph{DLE ($\varepsilon$-sampling) achieves higher accuracy than $\varepsilon$-sampling for a fixed token budget.}
Fixing the token budget for each question, DLE ($\varepsilon$-sampling) can complete more sequences by reusing prefixes, therefore producing a larger number of final answers for majority voting as shown in \figpartref{fig:accuracy-number-tokens-eps-ele}{right}. 
At low token budgets, this effect leads to significant gains for DLE. 

\subsection{Inference Speed} \label{sec:exp:computation}

\looseness=-1
In \Cref{sec:implementation}, we highlighted \textit{prefix reuse} as a major factor responsible for the efficiency of DLE.
\figpartref{fig:runtime-epsilon}{left} details the cache hit rate for DLE ($\varepsilon$-sampling) and self-consistency.
We find that DLE shows a higher cache hit rate because answer prefixes can be reused. 
In contrast, baseline $\varepsilon$-sampling can only reuse $k-1$ of all $k$ question prompts. 
All answer tokens have to be generated. 
Further, the actual cache hit rate (solid) is very close to the theoretical cache hit rate (dashed) as SGLang effectively retrieves the relevant KV cache from memory.
In \Cref{app:sec:computational-experiments}, we compare runtimes for vLLM and provide additional results.

In \figpartref{fig:runtime-epsilon}{right} we find that DLE ($\varepsilon$-sampling) achieves lower runtime on the SGLang inference engine.
This improvement is the most significant for small batch sizes of $1$ and $2$ where speed gains from parallelizing the generation of all traces cannot be effectively exploited by the baseline approach. 

\begin{figure}[!t]
    \centering
    \figpartlabel{fig:runtime-epsilon}{left}
    \figpartlabel{fig:runtime-epsilon}{right}
    \includegraphics{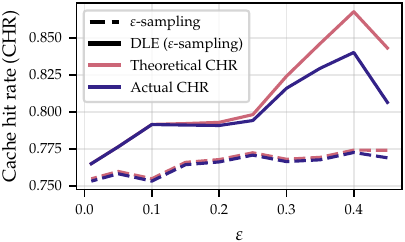}
    \includegraphics{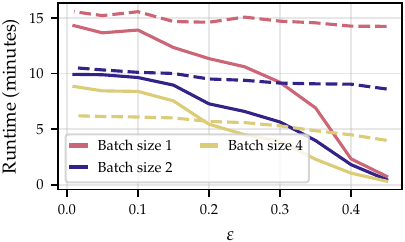}
    \caption{
    \textbf{Left}:
    \textbf{Cache rate is larger for DLE ($\varepsilon$-sampling) than $\varepsilon$-sampling.}
    SGLang ($k=8$) retrieves prefix cache efficiently such that the actual cache hit rate is close to the theoretical one. 
    See \Cref{sec:exp:computation}.
    \textbf{Right}:
    \textbf{Higher $\varepsilon$ results in much lower runtime for DLE ($\varepsilon$-sampling) than $\varepsilon$-sampling at $k=32$.}
    The time benefit is the most significant at small batch sizes and high $\varepsilon$. 
    Higher $\varepsilon$ enables more prefix reuse and larger runtime benefits.
    See \Cref{sec:exp:computation}.
    }
    \label{fig:runtime-epsilon}
\end{figure}

\looseness=-1
\paragraph{Algorithmic versus systems gains.}
DLE leads to benefits at two levels.
Algorithmically, it avoids duplicate leaf exploration and improves coverage of the truncated search space under a fixed budget. Systemically, its tree-structured exploration exposes shared prefixes that inference engines can exploit with KV-cache reuse. The latter gains depend on the serving stack, cache management, and batching strategy, and may vary across deployment settings. In practice, DLE is most attractive in constrained reasoning settings, smaller-budget regimes, and cache-aware inference engines, where duplicate exploration is common and prefix reuse can be exploited.

\section{Future Work and Conclusion}

\looseness=-1
\paragraph{Future work.}
One promising direction is to learn better branching policies that combine sequence probability with task-dependent quality signals. Another is to study adaptive hybrids of deterministic and stochastic exploration, for example by using DLE for high-mass regions and sampling for lower-mass tails. More broadly, the tree perspective may provide a useful interface between decoding and training, since branch points expose localized decisions that could support finer-grained credit assignment than sequence-level rewards.

\paragraph{Conclusion.}
We introduced Distinct Leaf Enumeration (DLE), a deterministic decoding strategy that explores truncated decoding trees without repeatedly sampling duplicate traces. Across reasoning and coding tasks, DLE improves coverage of the truncated search space and translates this into better performance and practical efficiency gains. More broadly, our results suggest that effective test-time scaling depends not only on generating more samples, but also on allocating computation more efficiently across plausible continuations.

\clearpage





\bibliography{colm2026_conference}
\bibliographystyle{colm2026_conference}

\newpage
\appendix
\onecolumn

\section{Datasets and Parameters}\label[appendix]{appendix:datasets and parameters}
MMLU-Pro \citep{wang_mmlu-pro_2024} metrics are weighted by the number of questions in each category.

All plotted diagrams and tables use Qwen2.5-0.5B-Instruct unless stated otherwise. All experiments are run on B200 or H100 GPUs using vLLM. The default benchmark for diagrams is GSM8K unless stated otherwise.

\cref{fig:2} measures repetition rate using only generated answer tokens, excluding the prompt. For each question, we compare $k$ generated answers and count repeated prefix tokens: a token is counted as repeated if it belongs to an initial prefix of length $m$ that matches the first $m$ tokens of any earlier generation. Per question repetition rate is averaged across all questions in each dataset. \cref{fig:2} uses $\varepsilon$-sampling with $\varepsilon=0.05$. Higher $\varepsilon$ results in higher repetition rate. 

\figpartref{fig:accuracy-number-tokens-eps-ele}{left} uses $k=32$ and DLE with $\varepsilon$-sampling.
The number of new tokens generated is summed over the full dataset, and averaged per question per $k$. Since at high $\varepsilon$, DLE does not need to use the full 32-sequences budget to explore the tree fully, its averaged number of new tokens is low. 

\figpartref{fig:accuracy-number-tokens-eps-ele}{right} uses DLE ($\varepsilon$-sampling, $\varepsilon=0.3$), and a fixed token budget for each question. For each question, sequences are generated one by one (not in parallel) until all tokens are used up.

\clearpage

\section{Ablations of the Proposed Algorithm} \label[appendix]{app:sec:early-stopping}

\subsection{Search Algorithms}\label[appendix]{app:other algos}

\Cref{fig:expansion-schemes} shows branching variations \textsc{ProbFirst} (main text) and \textsc{DivFirst} for DLE with $\varepsilon$-sampling truncated distribution.

\begin{figure}[h]
    \centering
    \resizebox{0.5\linewidth}{!}{\begin{tikzpicture}[
  box/.style={rounded corners=2pt, inner xsep=4pt, inner ysep=2pt, align=left},
  conn/.style={line width=0.75pt, draw=black!70, line cap=round},
  font=\scriptsize,
  title/.style={font=\small\bfseries, anchor=west},
  subtitle/.style={font=\small\bfseries, anchor=west}
]
\colorlet{cQ}{c0!50}
\colorlet{cOrange}{c6!50}
\colorlet{cRed}{c2!50}
\colorlet{cGreen}{c3!50}
\colorlet{cCyan}{c4!50}
\colorlet{cBlue}{c7!50}

\tikzset{
    tconn/.style={line width=0.75pt, draw=black!70, line cap=round, font=\tiny},
}


\node[box, fill=cQ] (q0) at (0,-4.15) {$Q_\varepsilon = 1$};

\node[box, fill=cQ, right=0pt of q0.east, anchor=west, xshift=15pt, yshift=17pt] (aTop)
  {$Q_\varepsilon = 0.9$};
\node[box, fill=cQ, right=0pt of q0.east, anchor=west, xshift=15pt, yshift=-17pt] (aBotl)
  {$\vphantom{Q}$};
\node[box, fill=cCyan, right=0pt of aBotl.east, anchor=west] (aBot)
  {$Q_\varepsilon=0.1$};

\draw[tconn] (q0.east) -- node[pos=.4, above, xshift=-4pt] {$0.9$} (aTop.west);
\draw[tconn] (q0.east) -- node[pos=.4, below, xshift=-4pt] {$0.1$} (aBotl.west);

\node[box, fill=cQ,   anchor=west, right=0pt of aTop.east, xshift=15pt, yshift=9pt] (tMid) {$Q_\varepsilon = 0.63$};
\node[box, fill=cQ, anchor=west, right=0pt of aTop.east, xshift=15pt, yshift=-9pt] (tAltl) {$\vphantom{Q}$};

\node[box, fill=cGreen, anchor=west, right=0pt of tAltl.east] (tAlt) {$Q_\varepsilon = 0.27$};

\draw[tconn] (aTop.east) -- node[pos=.4, above, xshift=-2pt] {$0.7$}  (tMid.west);
\draw[tconn] (aTop.east) -- node[pos=.4, below, xshift=-2pt] {$0.3$}  (tAltl.west);

\node[box, fill=cQ,    anchor=west, right=0pt of tMid.east, xshift=15pt, yshift=5.5pt] (leafWrong) {$Q_\varepsilon = 0.504$};
\node[box, fill=cQ, anchor=west, right=0pt of tMid.east, xshift=15pt, yshift=-5.5pt] (leafCalcl) {$\vphantom{Q}$};
\node[box, fill=cOrange, anchor=west, right=0pt of leafCalcl.east] (leafCalc) {$Q_\varepsilon = 0.186$};

\draw[tconn] (tMid.east) -- node[pos=.4, above, xshift=-2pt] {$0.8$}  (leafWrong.west);
\draw[tconn] (tMid.east) -- node[pos=.4, below, xshift=-2pt] {$0.2$}  (leafCalcl.west);

\node[box, fill=cCyan, anchor=west, right=0pt of aBot.east, xshift=15pt, yshift=9pt] (bMid)
  {$Q_\varepsilon = 0.1$};
\node[box, fill=cCyan, anchor=west, right=0pt of aBot.east, xshift=15pt, yshift=-9pt] (bLeafl)
  {$\vphantom{Q}$};
\node[box, anchor=west, right=0pt of bLeafl.east] (bLeaf)
  {\textcolor{red}{\Large\xmark}};

\draw[tconn] (aBot.east) -- node[pos=.4, above, xshift=-2pt] {$0.95$} (bMid.west);
\draw[tconn] (aBot.east) -- node[pos=.4, below, xshift=-2pt] {$0.05$} (bLeafl.west);

\coordinate (labelx) at ([xshift=6mm]leafWrong.east);
\node[box] (l1) at (labelx |- leafWrong.east) {$L_1$};
\node[box] (l2) at (labelx |- leafCalc.east)  {$L_2$};
\node[box] (l3) at (labelx |- tAlt.east) {$L_3$};
\node[box] (l4) at (labelx |- bMid.east)  {$L_4$};
\node[box] (l5) at (labelx |- bLeaf.east)  {$L_5$};

\end{tikzpicture}}
    \caption{
    \textbf{Branching variations of DLE ($\varepsilon$-sampling).}
    Toy example with $\varepsilon=0.1$.
    \textsc{ProbFirst}: most likely branches are explored as \textcolor{c0}{$L_1$}, \textcolor{c3}{$L_3$}, \textcolor{c6}{$L_2$}, \textcolor{c4}{$L_4$}.
    \textsc{DivFirst}: earliest branches are explored as \textcolor{c0}{$L_1$}, \textcolor{c4}{$L_4$}, \textcolor{c3}{$L_3$}, \textcolor{c6}{$L_2$}.
    Branch \textcolor{c7}{$L_5$} falls below the $\varepsilon$ threshold and is not explored. 
    Colors distinguish sampling rollouts.
    }
    \label{fig:expansion-schemes}
\end{figure}
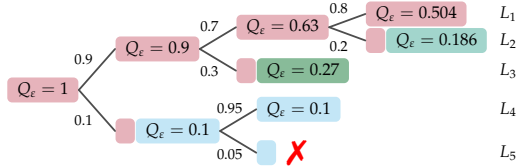

\paragraph{Branching with \textsc{ProbFirst}.}
\textsc{ProbFirst} expands the branch that carries the largest probability mass seen so far. 
Each branch with prefix $\rvx_{<t}\circ v$ is scored as $Q_\varepsilon(\rvx_{<t}\circ v)$ and we choose to expand the branch with the largest score.
This rule prioritizes exploring branches which currently seem most likely under $Q_\varepsilon$.
\Cref{fig:expansion-schemes} shows the generation of branches in order.
With $\varepsilon=0.1$, \textsc{ProbFirst} \textcolor{c0}{first generates branch $L_1$ } greedily after which 3 branch points will appear with path probabilities $0.1$, $0.27$ and $0.186$. It explores \textcolor{c3}{branch $L_3$} as it is the largest, then the next largest \textcolor{c6}{branch $L_2$}.
Then during the exploration for \textcolor{c4}{$L_4$}, two branches emerge but \textcolor{c7}{branch $L_5$} falls below the threshold so it is pruned. 
The remaining branch \textcolor{c4}{$L_4$} becomes deterministic under $\varepsilon$-sampling at that node (edge weight 1).
We can contrast this approach with a diversity-first strategy:

\paragraph{Branching with \textsc{DivFirst}.}
The goal of \textsc{DivFirst} is to encourage diversity \textit{early} in the generated sequences. 
The branching point is selected according to the earliest token position. 
This forces early divergence near the root.
In \Cref{fig:expansion-schemes}, \textsc{DivFirst} creates \textcolor{c0}{branch $L_1$} first greedily. Then chooses the top alternative token at the first branch point that creates \textcolor{c4}{$L_4$} during which \textcolor{c7}{$L_5$} is rejected. The next earliest branch points is at \textcolor{c3}{$L_3$}, and then at \textcolor{c6}{$L_2$}.

The branching algorithm described in the main paper is called \textsc{ProbFirst}. In addition to \textsc{ProbFirst}, we test four additional search algorithms
\begin{itemize}
    \item 
    \textsc{DivFirst}
    selects a branching point according to the earliest available alternative token position. 
    This forces early divergence near the root.

    \item \textsc{RandBranch} chooses branching points by \textit{sampling} from the distribution defined by the \textsc{ProbFirst} weights. We define a categorical distribution proportional to the probability mass under $\varepsilon$-sampling and sample it to get the next branching point. This is no longer deterministic, unlike \textsc{ProbFirst}.
    \item \textsc{GlobalProb} chooses branches that have the highest global edge weight. Instead of choosing branches that have the highest global path weight so far (from accumulated edge probabilities), it chooses to explore the alternative token that has the highest probability among all the branches explored so far.
    \item \textsc{DFS} is Depth-First-Search that starts with greedy generation, but deepest branches are explored first. This prioritizes late branching.
\end{itemize}

\Cref{table:early stopping} shows that \textsc{DivFirst} performs on par with \textsc{ProbFirst}. Other alternative methods are worse without early-stopping. 

\begin{table}[!h]
\centering
\setlength{\tabcolsep}{3pt}
\renewcommand{\arraystretch}{1.2}
\newcommand{\na}{\multicolumn{1}{c}{\textemdash}}
{\small
\begin{tabularx}{\textwidth}{lYYYYYY}
\toprule
\multirow{2}{*}{Method}
& \multicolumn{3}{c}{GSM8K} & \multicolumn{3}{c}{Humaneval}  \\
\cmidrule(lr){2-4} \cmidrule(lr){5-7}
& {\small maj@2} & {\small maj@4} & {\small maj@8}
& {\small pass@2} & {\small pass@4} & {\small pass@8} \\
\midrule
DLE-\textsc{DivFirst}     & 35.41 & 40.41 & 44.35& 39.02 & 46.34 & 51.22   \\
 w/o early stopping & 34.95 & 40.49 & 45.11 & 39.02 & 46.34 & 50.61 \\
\midrule
DLE-\textsc{ProbFirst}  & 34.57 & 40.64 & 44.05 & 38.41 & 45.12 & 52.44  \\
 w/o early stopping & 34.27 & 39.95 & 43.29 & 38.41 & 45.12 & 51.22 \\
\midrule
DLE-\textsc{RandBranch}  & 34.72 & 39.27 & 44.43 & 34.76 & 46.34 & 53.05    \\
 w/o early stopping & 33.97 & 39.04 & 42.68 & 35.98 & 42.68 & 48.78 \\
\midrule
DLE-\textsc{DFS} &33.81&33.81&34.12&31.10&31.71&31.71\\
\midrule
DLE-\textsc{GlobalProb} &33.97&34.72&35.25&34.76&37.8&40.24\\
\bottomrule
\end{tabularx}
}
\caption{
Adding an early-stopping condition to DLE ($\varepsilon$-sampling) baselines generally improves performance. Alternative branching algorithms \textsc{DFS} and \textsc{GlobalProb} yield worse performance.
}
\label{table:early stopping}
\end{table}

\subsection{Early Stopping}\label[appendix]{app:early stopping}
 
To avoid repeatedly generating the same continuation, we early-stop.
For example, suppose we branch at position $i$, producing two candidates
$\rvx_{1:i-1}\circ v$ and $\rvx_{1:i-1}\circ v'$ with $v\neq v'$.
We greedily decode forward $n$ steps from the new branch, obtaining a short suffix
$s = g_1 g_2 \cdots g_{n}$
where $g_j$ is the $j$-th greedy token after the branching token.
If this suffix $s$ matches the first $n$ greedy tokens after position $i$ of \emph{any previously generated leaf} that branched from the same prefix $\rvx_{1:i-1}$ (i.e., an earlier explored sibling branch), then we declare that the two rollouts have merged and terminate the current branch.

We provide an ablation that removes the early stopping condition on DLE with $\varepsilon$-sampling. \cref{table:early stopping} shows that adding early stopping generally results in better performance with low cost of additional tokens. 

\begin{figure*}[!h]
    \centering
    \includegraphics{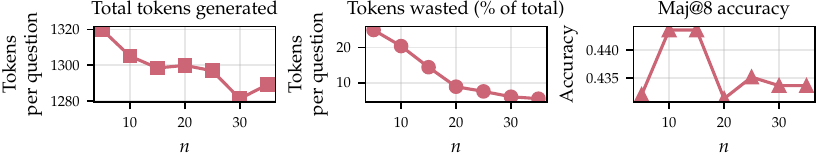}
    \caption{The $x$-axis shows $n$, the number of repeated token generated from a branch point, after which this branch will be terminated for DLE ($\varepsilon$-sampling). As $n$ increases, the early-stopping condition is triggered less frequently, resulting in fewer wasted tokens.
    }
    \label{fig:app:token-wasted}
\end{figure*}

\Cref{fig:app:token-wasted} shows that the fraction of wasted tokens remains small relative to the total number of generated tokens, so the heuristic adds little compute overhead. The best $n$ is around 10--15 and we use $n=10$ in the main paper.

\begin{table}[!b]
    \small
    \centering
    \begin{tabular}{lccc}
        \toprule
         & Trigger rate & Exact suffix match & Exact answer match \\
        \midrule
        k=8   & 8.93\% & 16.85\% & 85.87\% \\
        k=32   & 9.96\% & 19.96\% & 88.13\% \\
        \bottomrule
    \end{tabular}
    \caption{The early-stopped branches result in identical final answers with high probability, motivating the addition of early-stopping rule, with low cost of additional compute.}
    \label{table:ngram overlap rate}
\end{table}

To quantify whether early stopping prevents redundant answer exploration, we measure:
\begin{itemize}
    \item Trigger rate: how often branch-level n-gram repetition heuristic is triggered. How many branches are early stopped, out of all explored branches.
    \item Exact suffix match: how often the early-stopped branches would have continued to produce the exact full continuation, if generation had been allowed to continue with no early-stop rule. In other words, among the pruned branches, what percentage are true duplicate races rather than branches that only matched on the initial n-gram but later diverged.
    \item Exact answer match: how often the early-stopped branches would have produced the exact same final answer.
\end{itemize}

\cref{table:ngram overlap rate} shows that early-stopped sequences account for only a small fraction of explored branches, with trigger rates below 10\% in both settings. This indicates that the branch-level n-gram heuristic is relatively selective. It does not fire often, but instead targets a narrow subset of branches that are especially likely to revisit an already explored continuation.
Among the branches that are pruned, exact suffix match is much lower than exact answer match. This suggests that the heuristic is not primarily removing branches that are token-for-token identical all the way to the end. Many pruned branches still differ at the surface-form level after the branch point. However, despite these suffix-level differences, the vast majority of them still lead to the same final answer, with exact answer match above 85\% in both settings.
Taken together, these results suggest that the early-stopping rule is effective at avoiding redundant answer exploration rather than only eliminating fully identical traces. Even when two branches diverge in their detailed continuation, the repetition signal is often a strong indicator that they will converge to the same answer. This makes the heuristic attractive in practice. It reduces unnecessary decoding on branches that are unlikely to contribute new answer-level diversity, while consuming little additional compute for abandoned branches (\cref{fig:app:token-wasted}).

\subsection{Diversity and Coverage}

\newcommand{\distinctn}{\mathrm{Distinct\text{-}}n}
\newcommand{\ngrams}{n\mathrm{\text{-}grams}}
\paragraph{N-gram diversity.}
The n-gram diversity of a string $\rvs$ is defined as the fraction of unique $\ngrams$ among all $\ngrams$ in $\rvs$ \citep{li_diversity-promoting_2016}.
Specifically,
$
     \distinctn(\rvs)
     = {|\mathrm{unique}(\ngrams(\rvs))|}/{|\ngrams(\rvs)|}.
$

We use $n=10$ for $\distinctn$ and early stopping.
\figpartref{fig:diversity-pass-maj-eps-ele}{a} shows $\varepsilon$-sampling has much larger $10$-gram diversity for $\varepsilon<0.3$. But that does not lead to better maj@k performance (\figpartref{fig:diversity-pass-maj-eps-ele}{b} and \figpartref{fig:diversity-pass-maj-eps-ele}{c}).
Generally, DLE outperforms the baseline despite lower diversity. 
This might be because n-gram diversity is a surface-level proxy for variation and is not directly aligned with correctness, so increasing it can change the form of generations without necessarily increasing the probability mass on correct solutions.

\begin{figure*}[!h]
    \centering
    \figpartlabel{fig:diversity-pass-maj-eps-ele}{a}
    \figpartlabel{fig:diversity-pass-maj-eps-ele}{b}
    \figpartlabel{fig:diversity-pass-maj-eps-ele}{c}
    \includegraphics{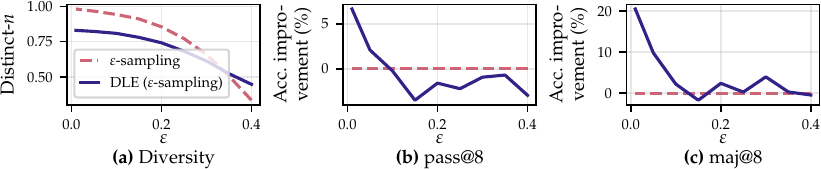}
    \caption{
    Diversity, pass@$8$ and maj@$8$ for DLE with $\varepsilon$-sampling.
    \textbf{(a)}
    $\varepsilon$-sampling shows the largest diversity for $\varepsilon<0.3$.
    \textbf{(b)} 
    Relative accuracy improvement (\%, pass@8) of DLE compared to $\varepsilon$-sampling is most significant at low $\varepsilon$. Pass@k reflects diversity, which does not necessarily lead to better maj@k performance as seen in
    \textbf{(c)}
    }
    \label{fig:diversity-pass-maj-eps-ele}
\end{figure*}

Below in \Cref{fig:coverage-epsilon-2}, we provide the raw data for \Cref{fig:coverage-epsilon}.
We draw the same conclusions as in the main text, see \Cref{sec:exp:diversity}. We emphasize that coverage should be viewed as a useful heuristic rather than a universal proxy for correctness. In constrained domains under truncated decoding, covering more of the retained probability mass is often associated with better performance because correct solutions tend to lie among a relatively small set of plausible continuations. However, this relationship need not hold in all settings, especially when correct answers lie in lower-probability regions or when model probabilities are poorly aligned with correctness.

\begin{figure}[h]
    \centering
    \includegraphics{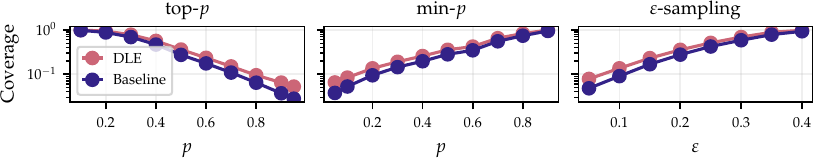}
    \caption{
    \textbf{DLE increases coverage over its baseline truncated sampling distributions.}
    Coverage $m$ (see \Cref{def:coverage}) between DLE and corresponding baseline plotted against $p$ (top-$p$, min-$p$) and $\varepsilon$ ($\varepsilon$-sampling).
    See \Cref{sec:exp:diversity} for more information.
    }
    \label{fig:coverage-epsilon-2}
\end{figure}

\subsection{Answer Aggregation Methods}\label[appendix]{app:answer aggregation methods}
Since we can obtain per-sequence probabilities during generation, we can aggregate answers by weighting them according to sequence-wise probabilities. However, we find that this method does not result in systematic improvement in performance, compared to weighting all final answers equally, as seen in \cref{table:answer-aggregation}.

\begin{table}[!h]
    \small
    \centering
    \begin{tabular}{lcc}
        \toprule
         & Probability weighted & Equally weighted \\
        \midrule
        \multicolumn{3}{l}{\textbf{Qwen2.5-0.5B-Instruct}} \\
        \textsc{DivFirst}   & 0.4170 & 0.4435 \\
        \textsc{ProbFirst}  & 0.4177 & 0.4405 \\
        \textsc{RandBranch} & 0.4109 & 0.4443 \\
        \midrule
        \multicolumn{3}{l}{\textbf{Qwen2.5-7B-Instruct}} \\
        \textsc{DivFirst}   & 0.8878 & 0.8992 \\
        \textsc{ProbFirst}  & 0.8916 & 0.8886 \\
        \textsc{RandBranch} & 0.8908 & 0.8893 \\
        \bottomrule
    \end{tabular}
    \caption{Weighting final answers by their leaf probabilities does not systematically improve performance. Decoding methods use DLE ($\varepsilon$-sampling)}
    \label{table:answer-aggregation}
\end{table}

\clearpage

\section{More Context on Distinct Leaf Enumeration}

\Cref{app:sec:expected-coverage-eps-sampling} shows that the expected coverage of DLE leads to diminishing returns when increasing $k$, while the motivation of DLE is to explore more \textit{new} probability mass.
\Cref{app:sec:greedy-sampling-optimality} shows that our greedy choice is locally optimal and minimizes computational cost.
\Cref{app:sec:acc-number-tokens-generated} shows an extension to \figpartref{fig:accuracy-number-tokens-eps-ele}{left}.

\subsection{Expected Coverage} \label[appendix]{app:sec:expected-coverage-eps-sampling}

Let $\mathcal L$ denote the set of all leaves in the pruned tree.
Let $\rvx^{(1)},\ldots,\rvx^{(k)} \sim Q$ be the $k$ leaves produced by the truncated sampling algorithm, and let
$S_k \coloneqq \{\rvx^{(1)},\ldots,\rvx^{(k)}\}$ be the set of unique leaves.

\begin{proposition}[Expected coverage of truncated sampling] \label{prop:expected-coverage-eps-sampling}
    The expected coverage of sampling the truncated distribution after $k$ samples is
    \begin{align}
    \mathbb{E}\left[m(S_k)\right]
    = \sum_{\rvx\in\mathcal L} Q(\rvx)\left(1-(1-Q(\rvx))^k\right).
    \label{eq:expected-coverage-eps-sampling}
    \end{align}
    The expected marginal gain is
    \begin{align}
    \mathbb{E}\left[m(S_{k+1})-m(S_k)\right]
    = \sum_{\rvx\in\mathcal L} Q(\rvx)^2 (1-Q(\rvx))^k,
    \label{eq:marginal-gain-eps-sampling}
    \end{align}
    which is non-increasing in $k$.
\end{proposition}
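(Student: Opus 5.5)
The plan is to write the coverage of the random set $S_k$ as a sum of indicator variables over the fixed leaf set $\mathcal L$ and then use linearity of expectation. Since $S_k$ is a set, each distinct leaf is counted at most once:
\begin{align*}
m(S_k) = \sum_{\rvx\in\mathcal L} Q(\rvx)\,\mathbb{1}[\rvx\in S_k].
\end{align*}
The samples $\rvx^{(1)},\ldots,\rvx^{(k)}$ are i.i.d.\ from $Q$. Hence $\Pr[\rvx\notin S_k]=\prod_{j=1}^k\Pr[\rvx^{(j)}\neq\rvx]=(1-Q(\rvx))^k$, and taking expectations term by term gives \Cref{eq:expected-coverage-eps-sampling} directly. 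I will note that $\mathcal L$ is finite, since the pruned tree has finitely many leaves under a maximum generation length. Even if it were countable, the exchange of sum and expectation is justified by nonnegativity (Tonelli).

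For the marginal gain, I will subtract the closed forms for $k+1$ and $k$:
\begin{align*}
(1-(1-Q)^{k+1})-(1-(1-Q)^k)=(1-Q)^k\bigl(1-(1-Q)\bigr)=Q(1-Q)^k.
\end{align*}
Multiplying by $Q(\rvx)$ and summing yields \Cref{eq:marginal-gain-eps-sampling}. As a sanity check, there is an alternative derivation: $m(S_{k+1})-m(S_k)=Q(\rvx^{(k+1)})\,\mathbb{1}[\rvx^{(k+1)}\notin S_k]$. Conditioning on $\rvx^{(k+1)}=\rvx$, which has probability $Q(\rvx)$, gives the same expression.

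Monotonicity is then termwise. Each summand $Q(\rvx)^2(1-Q(\rvx))^k$ is nonnegative, and because $0\le 1-Q(\rvx)\le 1$ we have $(1-Q(\rvx))^{k+1}\le(1-Q(\rvx))^k$, so the sum is non-increasing in $k$. There is no real obstacle here. The only points needing care are bookkeeping: coverage of $S_k$ must be defined on the set of distinct leaves, so duplicates are not double-counted, and the samples must be independent, which holds because self-consistency draws each trace independently from $Q$. The edge cases $Q(\rvx)\in\{0,1\}$ are handled by the convention $0^0=1$ at $k=0$.
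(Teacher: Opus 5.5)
Your proposal is correct and follows essentially the same route as the paper: you write $m(S_k)=\sum_{\rvx\in\mathcal L} Q(\rvx)\,\mathbb{1}[\rvx\in S_k]$, apply linearity of expectation, use independence of the $k$ draws to get $\Pr[\rvx\notin S_k]=(1-Q(\rvx))^k$, and difference the closed forms to obtain the marginal gain. Your termwise argument that $Q(\rvx)^2(1-Q(\rvx))^k$ is non-increasing in $k$ spells out a step that the paper's proof states but does not prove.
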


\begin{proof}
Let $\rvy^{(j)}$ for $j=1, \dots, k$ be the distinct leaves $\in S_k$.
Then,
\begin{align}
    \mathbb{E} \left[m(S_k)\right]
    &= \mathbb{E}\left[\sum_{\rvy\in S_k} Q(\rvy)\right] \\
    &= \mathbb{E}\left[\sum_{\rvy\in\mathcal L} Q(\rvy)\mathbb{1}\{\rvy\in S_k\}\right] \\
    &= \sum_{y\in\mathcal L} Q(\rvy)\mathbb{E}\left[\mathbb{1}\{\rvy\in S_k\}\right] \\
    &= \sum_{y\in\mathcal L} Q(\rvy)\mathbb{P}(\rvy\in S_k) \\
    &= \sum_{y\in\mathcal L} Q(\rvy)(1-\mathbb{P}(\rvy\notin S_k)) \\
    &= \sum_{y\in\mathcal L} Q(\rvy)\left(1-\mathbb{P}(\forall j\in\{1,\ldots,k\}: \rvy^{(j)} \neq \rvy)\right) \\
    &= \sum_{y\in\mathcal L} Q(\rvy)\left(1-\prod_{j=1}^k \mathbb{P}(Y_j\neq y)\right) \\
    &= \sum_{y\in\mathcal L} Q(\rvy)\left(1-(1-Q(y))^k\right).
\end{align}
The marginal gain follows by the difference of two of the terms.
\end{proof}

Therefore, while $\mathbb{E}\left[m(S_k)\right]$ is increasing in $k$, $\mathbb{E}\left[m(S_{k+1})-m(S_k)\right]$ is decreasing. 
This shows diminishing returns for the expected coverage of sampling from a truncated distribution.
Intuitively, this shows that the ``waste'' of truncated sampling comes from duplicate leaves since truncated sampling samples full leaves \textit{i.i.d.~with replacement}. 

\paragraph{Connection to DLE.}
In contrast, DLE deterministically expands previously unexplored branches of the same pruned tree which \textit{largely avoids generating duplicate leaves}.
Thus, for the same leaf budget $k$, DLE can allocate more compute towards \textit{uncovering new probability mass}.

\subsection{Greedy Sampling is Locally Optimal and Compute Optimal} \label[appendix]{app:sec:greedy-sampling-optimality}

After picking an alternative token at a branch point, we always complete the sequence greedily.
For a prefix $\rvx$ and a horizon $T$, its best continuation is $V^\ast(\rvx, T) = \max_{\rvx'_{1:T}\in \mathcal V^T}\ \log p(\rvx'_{1:T}\mid \rvx)$, with Bellman optimality recursion \citep{bellman1966dynamic}
\begin{align}
    \!\!\!
    V^\ast(\rvx, T) 
    =\! \max_{v\in A_\varepsilon(\rvx)}
    [
    \log p (v\mid \rvx) + V^\ast(\rvx\circ v, T-1)
    ],
\end{align}
where $V^\ast(\rvx,0)=0$.
Greedy sampling replaces the term $V^\ast(\rvx\circ v, T-1)$ by $0$.
\looseness=-1
\begin{remark}[Local and compute optimality of greedy sampling]
    Greedy sampling $v = g(\rvx_{<t})$ chooses, by definition, the maximum next-token probability $p(v\mid\rvx_{<t})\geq p(v' \mid\rvx_{<t})\forall v' \in \mathcal V$, and it generates a sample in $T$ forward passes.
\end{remark}



\clearpage
\section{Additional Results}\label[appendix]{app:additional results}

\subsection{Results for Other Models}

\Cref{table:qwen7b} shows results for Qwen2.5-7B-Instruct, Qwen2.5-14B-Instruct, Llama3.2-1B-Instruct, and Llama3.2-3B-Instruct with various samplers and branching algorithms.

\begin{table*}[!h]
\centering
\label{table:qwen7b}
\setlength{\tabcolsep}{3pt}
\renewcommand{\arraystretch}{1.2}
\newcommand{\na}{\multicolumn{1}{c}{\textemdash}}
{\small
\begin{tabularx}{\textwidth}{lYYYYYYYYY}
\toprule
\multirow{2}{*}{Method}
& \multicolumn{3}{c}{GSM8K} & \multicolumn{3}{c}{Humaneval} & \multicolumn{3}{c}{MMLU-Pro} \\
\cmidrule(lr){2-4} \cmidrule(lr){5-7} \cmidrule(lr){8-10}
& {\small maj@2} & {\small maj@4} & {\small maj@8}
& {\small pass@2} & {\small pass@4} & {\small pass@8}
& {\small maj@2} & {\small maj@4} & {\small maj@8} \\
\midrule
Self-consistency & 78.77 & 86.05 & 89.16 & 64.02 & 77.44 & 85.98 & 45.79 & 48.43 & 51.75 \\
\midrule
Self-consistency (top-p+top-k) & 81.12 & 87.79 & 89.23 & 68.90 & 78.05 & 87.20 & 45.89 & 47.11 & 50.13 \\
DLE (top-p+top-k) & 84.23 & 87.79 & 89.69 & 81.10 & 85.37 & 87.20 & 46.78 & 48.61 & 50.52 \\
\midrule
Self-consistency (min-p) & 81.58 & 87.49 & 89.84 & 70.43 & 80.38 & 83.40 & 46.14 & 48.09 & 49.88 \\
DLE (min-p) & 84.61 & 88.48 & 89.16 & 79.88 & 84.15 & 86.59 & 46.84 & 48.55 & 50.40 \\
\midrule
Self-consistency ($\varepsilon$-sampling) & 81.43 & 87.87 & 89.92 & 72.56 & 81.10 & 88.41 & 46.19 & 48.31 & 51.89 \\
DLE ($\varepsilon$-sampling)-\textsc{DivFirst}  & 84.91 & 88.48 & 89.92 & 81.10 & 86.59 & 87.80 & 46.89 & 48.90 & 50.71 \\
DLE ($\varepsilon$-sampling)-\textsc{ProbFirst} & 84.15 & 87.95 & 88.86 & 79.88 & 84.76 & 87.20 & 46.88 & 48.72 & 50.62 \\
DLE ($\varepsilon$-sampling)-\textsc{RandBranch} & 83.78 & 87.11 & 88.78 & 76.83 & 83.54 & 88.41 & 46.70 & 48.72 & 50.69 \\
\bottomrule
\end{tabularx}
}
\caption{
Performance (maj@k and pass@k) of various methods on GSM8K, Humaneval and MMLU-Pro with the Qwen2.5-7B-Instruct model.
}
\end{table*}

\begin{table*}[!h]
\centering
\label{table:qwen14b}
\setlength{\tabcolsep}{3pt}
\renewcommand{\arraystretch}{1.2}
\newcommand{\na}{\multicolumn{1}{c}{\textemdash}}
{\small
\begin{tabularx}{\textwidth}{lYYYYYYYYY}
\toprule
\multirow{2}{*}{Method}
& \multicolumn{3}{c}{GSM8K} & \multicolumn{3}{c}{Humaneval} & \multicolumn{3}{c}{MMLU-Pro} \\
\cmidrule(lr){2-4} \cmidrule(lr){5-7} \cmidrule(lr){8-10}
& {\small maj@2} & {\small maj@4} & {\small maj@8}
& {\small pass@2} & {\small pass@4} & {\small pass@8}
& {\small maj@2} & {\small maj@4} & {\small maj@8} \\
\midrule
Self-consistency & 88.48 & 91.51 & 92.49 & 67.68 & 78.05 & 88.41 & 52.01 & 55.50 & 58.39 \\
\midrule
Self-consistency (top-p+top-k) & 89.61 & 91.21 & 92.80 & 72.56 & 84.76 & 89.02 & 52.68 & 55.31 & 57.74 \\
DLE (top-p+top-k) & 89.92 & 91.81 & 91.74 & 76.83 & 84.15 & 88.41 & 54.50 & 56.27 & 58.06 \\
\midrule
Self-consistency (min-p) & 89.99 & 91.21 & 92.34 & 76.22 & 81.10 & 85.37 & 52.68 & 55.31 & 57.74 \\
DLE (min-p) & 90.52 & 91.43 & 91.74 & 76.83 & 84.76 & 89.02 & 54.34 & 56.13 & 58.23 \\
\midrule
Self-consistency ($\varepsilon$-sampling) & 89.23 & 91.36 & 92.72 & 73.17 & 83.54 & 88.41 & 52.94 & 56.48 & 58.56 \\
DLE ($\varepsilon$-sampling)-\textsc{DivFirst}  & 90.60 & 91.51 & 92.34 & 76.22 & 85.98 & 88.41 & 54.66 & 56.40 & 58.18 \\
DLE ($\varepsilon$-sampling)-\textsc{ProbFirst} & 90.07 & 91.43 & 92.19 & 76.83 & 85.98 & 89.02 & 54.33 & 56.38 & 58.30 \\
DLE ($\varepsilon$-sampling)-\textsc{RandBranch} & 89.92 & 91.51 & 92.42 & 78.66 & 84.76 & 88.41 & 54.88 & 56.52 & 58.29 \\
\bottomrule
\end{tabularx}
}
\caption{
Performance (maj@k and pass@k) of various methods on GSM8K, Humaneval and MMLU-Pro with the Qwen2.5-14B-Instruct model.
}
\end{table*}

\begin{table*}[!h]
\centering
\label{table:llama1b}
\setlength{\tabcolsep}{3pt}
\renewcommand{\arraystretch}{1.2}
\newcommand{\na}{\multicolumn{1}{c}{\textemdash}}
{\small
\begin{tabularx}{\textwidth}{lYYYYYYYYY}
\toprule
\multirow{2}{*}{Method}
& \multicolumn{3}{c}{GSM8K} & \multicolumn{3}{c}{Humaneval} & \multicolumn{3}{c}{MMLU-Pro} \\
\cmidrule(lr){2-4} \cmidrule(lr){5-7} \cmidrule(lr){8-10}
& {\small maj@2} & {\small maj@4} & {\small maj@8}
& {\small pass@2} & {\small pass@4} & {\small pass@8}
& {\small maj@2} & {\small maj@4} & {\small maj@8} \\
\midrule
Self-consistency & 18.50 & 24.64 & 32.60 & 25.00 & 37.20 & 46.95 & 13.73 & 16.71 & 19.02 \\
\midrule
Self-consistency (top-p+top-k) & 27.60 & 33.21 & 39.65 & 31.71 & 44.51 & 54.88 & 17.30 & 19.30 & 20.78 \\
DLE (top-p+top-k) & 33.06 & 36.01 & 38.06 & 42.07 & 53.66 & 59.15 & 20.62 & 22.17 & 22.86 \\
\midrule
Self-consistency (min-p) & 30.10 & 35.25 & 40.25 & 37.20 & 46.34 & 54.88 & 17.73 & 20.09 & 21.29 \\
DLE (min-p) & 33.21 & 36.01 & 38.44 & 42.07 & 53.05 & 59.76 & 20.71 & 22.17 & 22.85 \\
\midrule
Self-consistency ($\varepsilon$-sampling) & 27.75 & 33.89 & 39.95 & 37.80 & 48.78 & 54.88 & 18.87 & 20.69 & 21.82 \\
DLE ($\varepsilon$-sampling)-\textsc{DivFirst}  & 31.39 & 34.65 & 36.85 & 31.71 & 39.63 & 46.95 & 19.76 & 21.23 & 22.15 \\
DLE ($\varepsilon$-sampling)-\textsc{ProbFirst} & 33.21 & 36.01 & 38.06 & 41.46 & 52.44 & 59.76 & 19.61 & 21.13 & 22.95 \\
DLE ($\varepsilon$-sampling)-\textsc{RandBranch} & 33.13 & 34.80 & 38.44 & 43.29 & 48.17 & 57.32 & 19.75 & 20.90 & 21.97 \\
\bottomrule
\end{tabularx}
}
\caption{
Performance (maj@k and pass@k) of various methods on GSM8K, Humaneval and MMLU-Pro with the Llama3.2-1B-Instruct.
}
\end{table*}

\begin{table*}[!h]
\centering
\label{table:llama3b}
\setlength{\tabcolsep}{3pt}
\renewcommand{\arraystretch}{1.2}
\newcommand{\na}{\multicolumn{1}{c}{\textemdash}}
{\small
\begin{tabularx}{\textwidth}{lYYYYYYYYY}
\toprule
\multirow{2}{*}{Method}
& \multicolumn{3}{c}{GSM8K} & \multicolumn{3}{c}{Humaneval} & \multicolumn{3}{c}{MMLU-Pro} \\
\cmidrule(lr){2-4} \cmidrule(lr){5-7} \cmidrule(lr){8-10}
& {\small maj@2} & {\small maj@4} & {\small maj@8}
& {\small pass@2} & {\small pass@4} & {\small pass@8}
& {\small maj@2} & {\small maj@4} & {\small maj@8} \\
\midrule
Self-consistency & 43.29 & 54.36 & 65.58 & 46.34 & 57.93 & 70.12 & 24.88 & 29.31 & 33.04 \\
\midrule
Self-consistency (top-p+top-k) & 56.25 & 65.81 & 72.48 & 49.39 & 64.02 & 73.78 & 29.74 & 33.64 & 36.17 \\
DLE (top-p+top-k) & 64.52 & 69.45 & 71.49 & 59.15 & 67.07 & 76.83 & 33.22 & 35.73 & 37.28 \\
\midrule
Self-consistency (min-p) & 59.89 & 68.39 & 73.69 & 51.83 & 65.85 & 76.83 & 30.68 & 34.73 & 36.08 \\
DLE (min-p) & 64.44 & 68.84 & 71.57 & 59.15 & 68.29 & 76.83 & 33.23 & 35.71 & 36.91 \\
\midrule
Self-consistency ($\varepsilon$-sampling) & 58.38 & 62.33 & 69.14 & 55.49 & 64.46 & 75.61 & 30.98 & 35.37 & 37.36 \\
DLE ($\varepsilon$-sampling)-\textsc{DivFirst}  & 62.62 & 67.40 & 69.52 & 55.49 & 65.24 & 71.95 & 33.19 & 35.65 & 37.23 \\
DLE ($\varepsilon$-sampling)-\textsc{ProbFirst} & 64.44 & 68.92 & 71.65 & 59.15 & 69.51 & 78.66 & 33.18 & 35.88 & 37.53 \\
DLE ($\varepsilon$-sampling)-\textsc{RandBranch} & 64.52 & 68.31 & 72.33 & 57.93 & 65.24 & 73.17 & 33.22 & 35.60 & 37.49 \\
\bottomrule
\end{tabularx}
}
\caption{
Performance (maj@k and pass@k) of various methods on GSM8K, Humaneval and MMLU-Pro with the Llama3.2-3B-Instruct model.
}
\end{table*}


\clearpage

\subsection{Additional Results on Accuracy and Number of New Tokens Generated} \label[appendix]{app:sec:acc-number-tokens-generated}

As an extension to \figpartref{fig:accuracy-number-tokens-eps-ele}{left} which only shows DLE with $\varepsilon$-sampling for $k=32$, we additionally provide results for $k=8$. 
We draw the same conclusion that DLE uses fewer tokens for the same accuracy.

\begin{figure}[!h]
    \centering
    \includegraphics{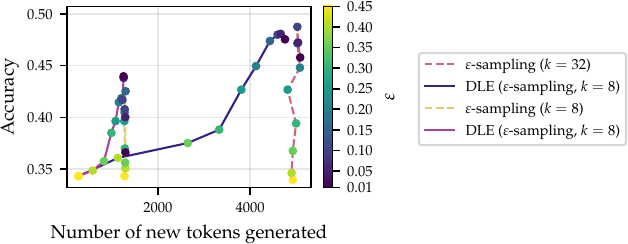}
    \caption{
    DLE with $\varepsilon$-sampling outperforms $\varepsilon$-sampling with far fewer new tokens generated.
    Number of new tokens is counted on average for each question.
    $\varepsilon$ is marked with colors indicating its value.
    Stochastic self-consistency generates many more tokens to achieve a similar performance as DLE.
    }
    \label{fig:accuracy-number-tokens-eps-ele-32}
\end{figure}

\subsection{Additional Results on Inference Speed} \label[appendix]{app:sec:computational-experiments}
In the main body, we discussed the inference speed benefit of using SGLang, which can output the cached token count for each generation request. However, vLLM does not output such data, so we cannot directly measure its cache retrieval rate. We can still see that there is some speed benefit from using DLE with $\varepsilon$-sampling at high $\varepsilon$ thresholds.

\begin{figure}[!h]
    \centering
    \includegraphics[width=\textwidth]{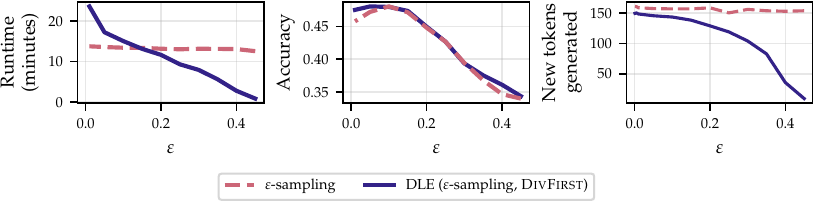}
    \caption{Inference time and performance for DLE with $\varepsilon$-sampling, measured using vLLM. At higher $\varepsilon$, inference time is shorter than the baseline. The performance boost is also more obvious since far fewer tokens are generated.}
    \label{fig:inference-time-performance}
\end{figure}

\clearpage
\subsection{Deep Think with Confidence (DeepConf) \citep{fu_deep_2025}} \label[appendix]{app:sec:deep-conf}

DeepConf \citep{fu_deep_2025} is a test-time method that filters out reasoning traces of low quality when doing inference or after inference.
We only investigate the offline version of DeepConf.

\citet{fu_deep_2025} defines the token confidence as the negative average $\log$-probability of the top-$n$ tokens at a given position $i$
\begin{align}
    C_i = -\frac{1}{n}\sum_{j=1}^n \log P^\mathrm{top}_i(j),
\end{align}
where $n$ denotes the number of tokens considered and $P^\mathrm{top}_i$ returns the probabilities of top tokens ordered decreasingly at position $i$.
The average trace confidence is simply defined as the mean over all token confidences.

Additionally, DeepConf introduces group confidence for groups of tokens, bottom $10\%$ group confidence, lowest group confidence, and tail confidence. 
For voting, DeepConf introduces confidence-weighted majority voting that weights each answer by its confidence, and confidence filtering that filters the top-$p\%$ of traces.

In our experiments, we set the group size to $64$ tokens (as opposed to $2048$ for the original work) as we are working with reasoning problems that require fewer tokens.
We experiment with $p\in\{10, 25\}\%$.

\Cref{table:deep-conf-gsm8k} shows results for all DeepConf experiments \citep{fu_deep_2025} on GSM8K \citep{cobbe_training_2021} with Qwen2.5-0.5B-Instruct.
\Cref{table:deep-conf-mmlu} shows results for all DeepConf experiments \citep{fu_deep_2025} on MMLU-Pro \citep{wang_mmlu-pro_2024} with Qwen2.5-0.5B-Instruct.
We include the best results achieved in \Cref{table:main}.

\begin{table*}[!h]
\setlength{\tabcolsep}{3pt}
\renewcommand{\arraystretch}{1.2}
\newcommand{\na}{\multicolumn{1}{c}{\textemdash}}
{\small
\begin{tabularx}{\textwidth}{lYYY}
\toprule
Method & {\small 2} & {\small 4} & {\small 8} \\
\addlinespace
\midrule
\multicolumn{4}{c}{\textbf{Mean Confidence}} \\
\midrule
\addlinespace
Weighted & 0.2002 & 0.2578 & 0.3298 \\
Top Percent Weighted (10\%) & 0.1948 & 0.2532 & 0.2616 \\
Top Percent Weighted (25\%) & 0.2153 & 0.2199 & 0.2570 \\
Top Percent (10\%) & 0.2115 & 0.2343 & 0.2509 \\
Top Percent (25\%) & 0.2055 & 0.2153 & 0.2464 \\
\addlinespace
\midrule
\multicolumn{4}{c}{\textbf{Tail Mean Conf (64 Tokens)}} \\
\midrule
\addlinespace
Weighted & 0.2085 & 0.2472 & 0.3222 \\
Top Percent Weighted (10\%) & 0.2002 & 0.2199 & 0.2350 \\
Top Percent Weighted (25\%) & 0.2032 & 0.2077 & 0.2456 \\
Top Percent (10\%) & 0.1941 & 0.2146 & 0.2358 \\
Top Percent (25\%) & 0.1903 & 0.2161 & 0.2525 \\
\addlinespace
\midrule
\multicolumn{4}{c}{\textbf{Tail Mean Conf (10\%)}} \\
\midrule
\addlinespace
Weighted & 0.1782 & 0.2252 & 0.3086 \\
Top Percent Weighted (10\%) & 0.1865 & 0.2077 & 0.1971 \\
Top Percent Weighted (25\%) & 0.1676 & 0.1956 & 0.2055 \\
Top Percent (10\%) & 0.1638 & 0.1850 & 0.2077 \\
Top Percent (25\%) & 0.1804 & 0.1933 & 0.2039 \\
\addlinespace
\midrule
\multicolumn{4}{c}{\textbf{Min Sliding Mean Conf (64 Tokens)}} \\
\midrule
\addlinespace
Weighted & 0.1895 & 0.2563 & 0.3283 \\
Top Percent Weighted (10\%) & 0.1873 & 0.2252 & 0.2464 \\
Top Percent Weighted (25\%) & 0.2092 & 0.2199 & 0.2418 \\
Top Percent (10\%) & 0.1842 & 0.2206 & 0.2593 \\
Top Percent (25\%) & 0.2077 & 0.2244 & 0.2214 \\
\addlinespace
\midrule
\multicolumn{4}{c}{\textbf{Bottom Sliding Mean Conf (10\%, 64 Tokens)}} \\
\midrule
\addlinespace
Weighted & 0.2077 & 0.2631 & \textbf{0.3336} \\
Top Percent Weighted (10\%) & 0.1979 & 0.2206 & 0.2661 \\
Top Percent Weighted (25\%) & 0.1774 & 0.2183 & 0.2487 \\
Top Percent (10\%) & 0.1979 & 0.2237 & 0.2449 \\
Top Percent (25\%) & 0.1676 & 0.2183 & 0.2441 \\
\addlinespace
\midrule
\multicolumn{4}{c}{\textbf{Bottom Sliding Mean Conf (50\%, 64 Tokens)}} \\
\midrule
\addlinespace
Weighted & 0.1895 & \textbf{0.2691} & 0.3139 \\
Top Percent Weighted (10\%) & 0.1918 & 0.2206 & 0.2578 \\
Top Percent Weighted (25\%) & \textbf{0.2168} & 0.2290 & 0.2638 \\
Top Percent (10\%) & 0.1850 & 0.2115 & 0.2623 \\
Top Percent (25\%) & \na & 0.2199 & 0.2631 \\
\bottomrule
\end{tabularx}
}
\caption{Results for DeepConf \citep{fu_deep_2025} on GSM8K \citep{cobbe_training_2021} with Qwen2.5-0.5B-Instruct. 
See \Cref{app:sec:deep-conf} for details.}
\label{table:deep-conf-gsm8k}
\end{table*}

\begin{table*}[!h]
\setlength{\tabcolsep}{3pt}
\renewcommand{\arraystretch}{1.2}
\newcommand{\na}{\multicolumn{1}{c}{\textemdash}}
{\small
\begin{tabularx}{\textwidth}{lYYY}
\toprule
Method & {\small 2} & {\small 4} & {\small 8} \\
\addlinespace
\midrule
\multicolumn{4}{c}{\textbf{Mean Confidence}} \\
\midrule
\addlinespace
Weighted & 0.1406 & \textbf{0.1558} & 0.1607 \\
Top Percent Weighted (10\%) & 0.1371 & 0.1451 & 0.1544 \\
Top Percent Weighted (25\%) & 0.1363 & 0.1524 & 0.1439 \\
Top Percent (10\%) & 0.1433 & 0.1493 & 0.1471 \\
Top Percent (25\%) & 0.1419 & 0.1439 & 0.1503 \\
\addlinespace
\midrule
\multicolumn{4}{c}{\textbf{Tail Mean Conf (64 Tokens)}} \\
\midrule
\addlinespace
Weighted & 0.1381 & 0.1518 & 0.1635 \\
Top Percent Weighted (10\%) & 0.1362 & \na & 0.1478 \\
Top Percent Weighted (25\%) & 0.1410 & 0.1468 & 0.1516 \\
Top Percent (10\%) & 0.1409 & 0.1435 & 0.1479 \\
Top Percent (25\%) & 0.1388 & 0.1464 & 0.1470 \\
\addlinespace
\midrule
\multicolumn{4}{c}{\textbf{Tail Mean Conf (10\%)}} \\
\midrule
\addlinespace
Weighted & 0.1410 & 0.1503 & 0.1600 \\
Top Percent Weighted (10\%) & 0.1377 & 0.1434 & 0.1410 \\
Top Percent Weighted (25\%) & 0.1385 & 0.1376 & 0.1446 \\
Top Percent (10\%) & 0.1341 & 0.1420 & 0.1497 \\
Top Percent (25\%) & 0.1394 & 0.1385 & 0.1444 \\
\addlinespace
\midrule
\multicolumn{4}{c}{\textbf{Min Sliding Mean Conf (64 Tokens)}} \\
\midrule
\addlinespace
Weighted & 0.1401 & 0.1497 & \textbf{0.1676} \\
Top Percent Weighted (10\%) & 0.1418 & 0.1470 & 0.1469 \\
Top Percent Weighted (25\%) & 0.1347 & 0.1465 & 0.1456 \\
Top Percent (10\%) & 0.1411 & 0.1478 & 0.1499 \\
Top Percent (25\%) & 0.1418 & 0.1457 & 0.1470 \\
\addlinespace
\midrule
\multicolumn{4}{c}{\textbf{Bottom Sliding Mean Conf (10\%, 64 Tokens)}} \\
\midrule
\addlinespace
Weighted & 0.1395 & 0.1547 & 0.1618 \\
Top Percent Weighted (10\%) & 0.1389 & 0.1432 & 0.1490 \\
Top Percent Weighted (25\%) & 0.1375 & 0.1463 & 0.1490 \\
Top Percent (10\%) & 0.1426 & 0.1462 & 0.1495 \\
Top Percent (25\%) & 0.1394 & 0.1418 & 0.1476 \\
\addlinespace
\midrule
\multicolumn{4}{c}{\textbf{Bottom Sliding Mean Conf (50\%, 64 Tokens)}} \\
\midrule
\addlinespace
Weighted & \textbf{0.1441} & 0.1501 & 0.1646 \\
Top Percent Weighted (10\%) & 0.1419 & 0.1523 & 0.1526 \\
Top Percent Weighted (25\%) & 0.1390 & 0.1467 & 0.1489 \\
Top Percent (10\%) & 0.1365 & 0.1439 & 0.1511 \\
Top Percent (25\%) & 0.1391 & 0.1462 & 0.1461 \\
\bottomrule
\end{tabularx}
}
\caption{Results for DeepConf \citep{fu_deep_2025} on MMLU-Pro \citep{wang_mmlu-pro_2024} with Qwen2.5-0.5B-Instruct.
See \Cref{app:sec:deep-conf} for additional information.}
\label{table:deep-conf-mmlu}
\end{table*}

\clearpage

\subsection{Self-Certainty \citep{kang_scalable_2025}} \label[appendix]{app:sec:self-certainty}

Self-certainty \citep{kang_scalable_2025} is a method for response evaluation and selection.
More specifically, it is a metric that takes the probability distribution of the LLM into account to estimate response quality without additional compute.
The general idea is that a larger self-certainty also corresponds to an improved response accuracy.
Self-certainty is defined as the point-wise Kullback-Leibler \citep{kullback1951information} divergence between a uniform distribution and the next-token distribution, averaged over the sequence length,
\begin{align}
    \mathrm{Self\text{-}Certainty}(\rvx_{1:T})
    \coloneqq
    -\frac{1}{VT} \sum_{t=1}^T\sum_{v=1}^V \log(V \cdot p(v\mid \rvx_{<t}))
    ,
\end{align}
where $V = |\mathcal V|$ is the size of the vocabulary.

\citet{kang_scalable_2025} also identifies that score-based voting methods suffer from sensitivity to score scaling. 
Therefore, they propose an approach inspired by Borda-counts.
First, they rank $N$ outputs of models by confidence, obtaining a ranking $(r_1, r_2, \dots, r_N)$.
Then, they assign votes to these ranked outputs using the formula
\begin{align}
    v(r) = (N-r+1)^p,
\end{align}
where $r$ is the rank of the output and $p$ is the voting power.

\Cref{tab:self-certainty-gsm8k} shows results for all self-certainty experiments on GSM8K \citep{cobbe_training_2021} with Qwen2.5-0.5B-Instruct.
\Cref{tab:self-certainty-mmlu} shows results for all self-certainty experiments on MMLU-Pro \citep{wang_mmlu-pro_2024} with Qwen2.5-0.5B-Instruct.

\begin{table*}[!ht]
\centering
\setlength{\tabcolsep}{3pt}
\renewcommand{\arraystretch}{1.2}
\newcommand{\na}{\multicolumn{1}{c}{\textemdash}}
{\small
\begin{tabularx}{\textwidth}{l YYYYYY}
\toprule
Voting power $p$ & {\small $\argmax, p=0.0$} & {\small $p=0.1$} & {\small $p=0.3$} & {\small $p=0.5$} & {\small $p=0.7$} & {\small $p=0.9$} \\
\midrule
Voting $k=2$ & \textbf{20.09} & 19.94 & 19.94 & 20.02 & 19.94 & 19.94 \\
Voting $k=4$ & 23.81 & 25.09 & 24.87 & 24.41 & 24.64 & \textbf{25.32} \\
Voting $k=8$ & 28.73 & 29.80 & \textbf{32.22} & 31.01 & 30.17 & 31.54 \\
\bottomrule
\end{tabularx}
}
\caption{Results for self-certainty \citep{kang_scalable_2025} on GSM8K \citep{cobbe_training_2021} with Qwen2.5-0.5B-Instruct.
$\argmax, p=0.0$ denotes the case \textit{without voting} and choosing the $\argmax$ sequence for self-certainty.}
\label{tab:self-certainty-gsm8k}
\end{table*}

\begin{table*}[!ht]
\centering
\setlength{\tabcolsep}{3pt}
\renewcommand{\arraystretch}{1.2}
\newcommand{\na}{\multicolumn{1}{c}{\textemdash}}
{\small
\begin{tabularx}{\textwidth}{l YYYYYY}
\toprule
Voting power $p$ & {$\argmax, p=0.0$} & {\small $p=0.1$} & {\small $p=0.3$} & {\small $p=0.5$} & {\small $p=0.7$} & {\small $p=0.9$} \\
\midrule
Voting $k=2$ & 14.42 & 14.26 & 14.21 & 14.11 & 14.45 & \textbf{14.51} \\
Voting $k=4$ & 15.13 & 15.40 & \textbf{15.92} & 15.90 & 15.10 & 15.91 \\
Voting $k=8$ & 13.48 & 14.25 & 14.66 & \textbf{14.68} & 14.28 & 14.14 \\
\bottomrule
\end{tabularx}
}
\caption{Results for self-certainty \citep{kang_scalable_2025} on  MMLU-Pro \citep{wang_mmlu-pro_2024} with Qwen2.5-0.5B-Instruct.
$\argmax, p=0.0$ denotes the case \textit{without voting} and choosing the $\argmax$ sequence for self-certainty.}
\label{tab:self-certainty-mmlu}
\end{table*}

\clearpage


\end{document}